\documentclass[10pt]{article} 
\usepackage[preprint]{tmlr}

\usepackage{amsmath,amsfonts,bm}

\def\eqref#1{equation~\ref{#1}}

\def\1{\bm{1}}

\DeclareMathAlphabet{\mathsfit}{\encodingdefault}{\sfdefault}{m}{sl}
\SetMathAlphabet{\mathsfit}{bold}{\encodingdefault}{\sfdefault}{bx}{n}

\newcommand{\R}{\mathbb{R}}

\usepackage{url}
\usepackage{wrapfig}
\usepackage{lipsum} 

\title{The Global Empirical NTK: Self-Referential Bias and Dimensionality of Gradient Descent Learning} 

\author{\name James Hazelden \email jhazelde@uw.edu\\
        \addr Applied Mathematics, University of Washington; Allen Institute
        \AND
        \name Laura Driscoll \email laura.driscoll@alleninstitute.org\\
        \addr Allen Institute; Neurobiology \& Biophysics, University of Washington
        \AND
        \name Eli Shlizerman \email shlizee@uw.edu\\
        \addr Applied Mathematics, Electrical \& Computer Engineering, University of Washington
        \AND
        \name Eric Shea-Brown \email etsb@uw.edu\\
        \addr Applied Mathematics, Neurobiology \& Biophysics, University of Washington; Allen Institute
}

\usepackage{amsmath,amsfonts,amsthm}
\usepackage{booktabs}
\usepackage{nicefrac}
\usepackage{microtype}
\usepackage{graphicx}
\usepackage{enumitem}
\usepackage{hyperref}

\graphicspath{{figures/}}  

\newtheorem{proposition}{Proposition}
\newtheorem{theorem}{Theorem}
\newtheorem{corr}{Corollary}      

\newcommand{\err}{\text{err}}
\newcommand{\im}{\text{Im}}

\newcommand{\todo}[1]{}

\newcommand{\rank}{\text{rank}}

\newcommand{\K}{\mathcal{K}}

\renewcommand{\P}{\mathcal{P}}

\newcommand{\NTK}{\text{NTK}_{S}}
\newcommand{\F}{\mathcal{F}}
\newcommand{\cat}{\text{cat}}

\newcommand{\Lcomment}[1]{}
\newcommand{\EcommentDONE}[1]{}

\renewcommand{\vec}{\text{vec}}

\begin{document}

\maketitle

\begin{abstract}
    In training a neural network with gradient descent (GD), each iteration induces a linear operator that governs the first-order updates to a model's internal state variables. We define this operator as the Global Empirical Neural Tangent Kernel (\(\NTK\)). In finite-width networks, \(\NTK\) is typically intractable to form, leading prior work to focus on restrictive settings, such as tracking outputs only or taking the limit of infinitely many neurons. Here, we bridge this gap by studying the structure of \(\NTK\) for a wide range of models. Formulating the model state as the solution to a single global implicit constraint, we derive \(\NTK\) as a product of two operators: \(\K\), accounting for immediate parameter-to-state interactions, and \(\P\), describing internal state-to-state dependencies. For a broad class of weight-based models, including RNNs and transformers, we prove a universal Kronecker-core theorem showing that \(\K\) admits an exact, forward-pass-computable form given by the Gram matrix of weight-site variables. This core structure reveals that \(\NTK\) is structurally bottlenecked, constraining its effective rank and giving rise to a \textit{self-referential bias}, whereby GD preferentially learns within dominant modes of joint hidden and input concatenated activity.  For recurrent models (GRUs and RNNs), we examine the spectrum of \(\NTK\) and show when it is biased and low-rank in space or time under the proposed decomposition. We further demonstrate that the structure of the model dynamics at initialization biases \(\NTK\), restricting learning and preventing task components from being learned effectively. Finally, to demonstrate broader applicability, we show that the \(\NTK\) associated with a self-attention transformer is likewise structurally constrained to be low-rank. Overall, we show that \(\NTK\) possesses tractable structure that explains GD bias toward particular task solutions and typical emergence of low-rank representations. To further enable use of \(\NTK\) as a practical metric, we build a library, \texttt{kpflow} relying on randomized matrix-free numerical linear algebra.
\end{abstract}

\section{Introduction}

\paragraph{Background and Motivation} Gradient descent (GD) updates model dynamics indirectly by perturbing its parameters $\theta$. If \(h(\theta)\) denotes the model variables of interest, such as the hidden state activities in a neural network, and \(J_\theta := D_\theta h(\theta)\) is the corresponding Jacobian with respect to the parameters, then a first-order parameter perturbation \(\delta \theta\) induces a change \(J_\theta \delta\theta\). Thus, the positive semidefinite operator \(J_\theta J_\theta^*\) captures the local geometry through which GD drives corrections in the state space of the variable \(h\). Specifically, it describes the first-order process of transforming task error signals into corrections to \(h\). This operator is the \textit{Neural Tangent Kernel} (NTK), describing at each iteration the local state-space geometry of GD, as well as how the parameters bias first-order training dynamics~\citep{jacot2018neural}.

Research on the NTK operator has been developed primarily along two complementary directions. One emphasizes infinite-width limits, where the NTK becomes analytically tractable and has been used to study training dynamics and generalization from a variety of perspectives~\citep[e.g.,][]{Lee_2020,baratin2021implicitregularizationneuralfeature,shan2022theoryneuraltangentkernel,canatar2022outofdistributiongeneralizationkernelregression,bordelon2025dynamicallylearningintegraterecurrent}. The other perspective studies finite-width \emph{empirical} NTK, often focused on diagnostics, fast computation and approximation of this object, with comparatively fewer works developing explicit analytical characterizations of its structure in broad contexts~\citep{novak2022fastfinitewidthneural,george_nngeometry,mohamadi2023fast,fan2020spectra,hanin2020finitedepthwidth,xiang2025neuraltangentknowledgedistillation}. 

In this work, we formulate and study the \textit{Empirical Global-State NTK} (\(\NTK\)), obtained by collecting all internal variables of a model into a single tensor \(h \in S\), which we call the \textit{global state}. For example, in a multilayer perceptron evaluated on a batch of inputs, \(h\) contains the activations at every layer and for every input in the batch. For recurrent, implicit, or continuous-time models, \(h\) similarly collects all hidden-state trajectories across task inputs. The resulting operator \(\NTK\) therefore acts on the full state space \(S\).  This contrasts with many prior studies, which focus on operators that act only on the output space \citep{jacot2018neural}. In this sense, the usual output NTK can be viewed as a reduced view of this larger operator. This global formulation is practically useful because it makes the first-order GD dynamics closed at the level of the model state itself, allowing us to study how GD reshapes internal representations and dynamics, not just the output.

\paragraph{Theoretical Findings} In practice, \(\NTK\) is intractable to compute explicitly since it contains \(d^2\) when the state has \(d\) entries. The hidden state tensor can quickly become prohibitively large, typically scaling with batch size, as well as the layer and hidden unit count. 
To overcome this, we derive structural theorems demonstrating that the operator is both mathematically tractable and practically predictive. In particular, we show that, for a broad class of models specified by a constraint on all state variables, the \(\NTK\) admits a decomposition into a product of simpler operators, \(\P\) and \(\K\), describing state-to-state and immediate parameter-to-state gradient propagation, respectively (Proposition~\ref{prop:ntk}). In particular, the operator can be written in the form
\[
\NTK = \P \K \P^*
\]
For example, for a recurrent model, \(\K\) is an integral operator defining the immediate impact of parameters on the one-step dynamics, while \(\P\) is the causal Green's operator describing the model's linearized dynamics (Corollary~\ref{corr:recntk}). 

Our main theoretical result shows that \(\K\), the ``core,'' has an explicit, computable form that can be used to predict the full NTK low-rank spectrum. Specifically, for any model where parameters enter the internal dynamics through matrix-vector products---including models with nesting or recurrence---we show that \(\K\) has a universal Kronecker structure \citep{nielsen2010quantum},
\[
\K = V V^* \otimes I
\]
where \(V\) collects the activity at the model ``weight sites'' (Theorem~\ref{thm:core}).  These sites consist of all variables that the model weights multiply. For an RNN, this includes the task input and all hidden activations, while for a self-attention + MLP transformer, these weight sites consist of the task input and attention output (Appendix~\ref{subsec:corederivap}). Importantly, the weight sites are often both readily computable and of high interest in efforts to interpret model computation and dynamics. 

We demonstrate that the core is often a predictive proxy for the spectrum of the full operator \(\NTK\). In total, \(\NTK\) is decomposable into two interpretable operators: (1) the core \(V V^*\) which is architecture dependent and easy to compute, and (2) the operator \(\P\), encoding state-space dependencies of the model. Each operator constrains the spectral structure of the \(\NTK\), manifesting as implicit bias of GD toward particular tasks or low-rank dynamical solutions.  

In Corollary~\ref{corr:adjointalign} we highlight one example of this: \textit{self-referential bias}. Because GD is filtered through the core \(V V^*\), it can only produce corrections lying in the span of the current hidden state and task inputs. This allows us to quantify an important tradeoff:  that models producing low-rank dynamics, although often interpretable, can make learning highly restrictive by limiting which tasks are accessible to GD from a given initialization. 

Proposition~\ref{prop:spacetime} explores how low-rank structure in \(\NTK\) constrains learning. We define reduced views of the NTK that capture its behavior over hidden units (space) and over batches and timesteps (time). Crucially, if either of these reduced operators is low-rank, then GD is biased to produce low-rank state-space corrections. Thus, for GD to produce expressive updates, \(\NTK\) must be sufficiently rich in both its temporal structure (e.g., capturing both low- and high-frequency features) and its spatial structure, both of which we measure explicitly in Section~\ref{subsec:regimes}.

\begin{itemize}[leftmargin=1.8em,itemsep=0.1em,topsep=0.3em]
    \item Proposition~\ref{prop:ntk} derives the NTK decomposition
    \(
    \NTK = \P \K \P^*,
    \)
    \item Corollary~\ref{corr:recntk} characterizes the exact structure of \(\P\) and \(\K\) for recurrent models,
    \item Theorem~\ref{thm:core} demonstrates the exact, computable form
    \(
    \K = V V^* \otimes I_n,
    \)
    for any weight-based model,
    \item Corollary~\ref{corr:adjointalign} shows this structure gives rise to self-referential bias,
    \item Proposition~\ref{prop:spacetime} provides minimal, testable conditions for low-rank learning under Theorem~\ref{thm:core}.
\end{itemize}

\paragraph{Experiments and Library} To computationally illustrate and study the structure of \(\NTK\), we develop a library, \href{https://github.com/meeree/kpflow}{\texttt{kpflow}}, relying on randomized matrix-free numerical linear algebra (Appendix~\ref{sec:aprnla}). These approaches lead to speedups of multiple orders of magnitude over constructing the full empirical NTK and doing the same analysis. Our package maps one-to-one to the theory, supporting tensor products, operator composition, partial traced and randomized trace estimation, so that statements such as Theorem~\ref{thm:core} can be easily verified, with syntax as in Appendix~\ref{eqn:verifythm}.

We use the theory developed here and \texttt{kpflow} library to study low-rank structure in \(\NTK\), as well as self-referentially biased learning in recurrent models (GRU and RNN) and non-recurrent models (MLP + self-attention transformer). In each case, we find that \(\NTK\) is highly structured, with low-rank structure determined by the joint input-hidden state, \(V\). For GRU, we show that initial hidden state dynamics bias the NTK (Section~\ref{subsec:grucore}), so that, as in Corollary~\ref{corr:adjointalign}, a task can be learned only when the temporal modes of the target are reflected in the hidden-state representation (Section~\ref{subsec:grubias}). Furthermore, in a student-teacher setting (Section~\ref{subsec:regimes}), we show how input dimension and model initialization define distinct low-rank spatial and temporal regimes of \(\NTK\) for a recurrent model. Finally, to demonstrate broader applicability, we apply our methods to the transformer architecture. We find that input rank serves as an upper bound on the rank of \(\NTK\). Consequently, positional embeddings, augmenting the input to be higher rank, can overcome this bottleneck, better conditioning learning for a wider range of tasks (Section~\ref{subsec:transformer}).


\section{Related Work}

\paragraph{NTK and Linearized Views of Learning}
A central line of work studies gradient descent through the Neural Tangent Kernel (NTK), which becomes effectively fixed in the infinite-width limit and yields a linearized description of learning~\citep{jacot2018neural, allen2019convergence, mei2018mean, bordelon2020spectrum}. The infinite-width perspective has been extended beyond feedforward networks, including to vanilla RNNs, neural ODEs, and transformers~\citep{alemohammad2020recurrent,yang2020tensorprogramsII,feng2023deqntk}. Related work on lazy training, feature learning, finite-width kernel dynamics, and kernel regimes further clarifies when learning remains close to this initialization-dependent linearization and how it departs from it in richer regimes~\citep{chizat2019lazy, pmlr-v139-yang21c, bordelon2023dynamics, bordelon2024feature}. In contrast, here we propose to study the finite-width \emph{empirical} NTK acting on the \emph{global state} of the model, i.e. all variables involved in evaluation, rather than an infinite-width output kernel derived for a particular architecture. Our framework yields an explicit factorization of this operator, giving predictions about learning bias and rank bottlenecks related to the formation of the hidden state latent dynamics. Finally, our implicit global formulation of the dynamics (Equation~\ref{eqn:implicit}) is reminiscent of deep equilibrium models~\citep{bai2019deep} (DEQ). But while DEQ uses this form to study equilibrium or effectively infinite-depth models and compute exact gradients, we use it to provide a convenient reformulation of a general explicit model, from which the structural properties of \(\NTK\) follow more naturally.

\paragraph{Learning Dynamics in Recurrent and Dynamical Models}
A complementary line of theory examines learning in recurrent or continuous-depth models through dynamical-systems and gradient-flow viewpoints. Classical work connected backpropagation to adjoint-state methods and optimal control \citep{lecun1988theoretical, pearlmutter1995gradient, pontryagin1962mathematical}. Later work extended these approaches to Neural ODEs and related continuous-depth models \citep{weinan2017proposal, chen2018neural}. More recently, exact learning dynamics have been derived in special linear recurrent settings, yielding sharp descriptions of how gradient descent builds long timescales and low-dimensional structure \citep{bordelon2025dynamicallylearningintegraterecurrent}. Our work is similar in spirit to such dynamical viewpoints, however it targets nonlinear, finite-width models where closed-form training trajectories are generally unavailable. Instead of solving gradient flow explicitly, we derive a general operator decomposition that makes \(\NTK\) structurally interpretable across models. 

\paragraph{Dynamical Analyses of Learned Recurrent Representations}
A large empirical and theoretical literature studies trained recurrent networks through their learned dynamical structure, including fixed points, attractors, low-dimensional manifolds, and dynamical motifs \citep{sussillo2013opening, yang2019task, Farrell2022, turner2023simplicity, driscoll2024flexible,RecanatesiFarrell2021, NEURIPS2020sch}. Related perspectives in simplicity bias and low-rank learning likewise emphasize that GD often favors structured, low-dimensional, or shared solutions~\citep{turner2023simplicity,pellegrino2023low,farrell2023from,Myrov2026Hierarchical}.  The study \citep{pellegrino2023low} is particularly connected to our work, as it derives bounds for the rank of parameter updates in terms of adjoints and state-space corrections. In contrast to this work, our primary concern here is with state-space correction rank and, more broadly, the spectral structure and bias of state-space corrections, utilizing \(\NTK\). 

In recurrent models, related work also suggests that learning and memory can improve near the edge of chaos, where dynamics remain expressive without becoming strongly unstable~\citep{rajan2010stimulusdependent,mastrovito2024transition}. Our work is complementary to this literature. Rather than analyzing learned dynamics only after training, we study the operator governing \emph{state corrections during training}. This suggests a notion of \emph{self-referential bias}: \(\NTK\) has a spectrum that is structurally constrained by the span of the model and inputs activity at each GD iteration, biasing learning aligned with task input and model activations. In this sense, our results connect low-rank and task-dependent learning dynamics to the structure of the training operator itself.

\paragraph{Natural Gradient, Fisher Information, and Kronecker Structure}
Second-order and natural-gradient methods based on structured curvature approximations, such as K-FAC \citep{martens2015optimizing, grosse2016kronecker, martens2020new}, are somewhat related to our work in that they also exploit Kronecker structure in objects tied to the Fisher information or NTK. K-FAC, for example, assumes a layerwise Kronecker factorization of the Fisher information to approximate parameter-space curvature and efficiently precondition gradient descent. In contrast, we study the finite-width empirical NTK on the global state space and show that certain operators possess exact Kronecker structure. Thus, the connection is conceptual rather than algorithmic: in our setting, Kronecker structure is derived exactly, not imposed as an approximation.

\section{The Global-State NTK Decomposes into P and K}

\subsection{Any Model as an Implicit Constraint}

\paragraph{The Global State}

Any parameterized, input-driven model fundamentally consists of a system of variables, constrained by the parameters and task inputs. Throughout, we use the following notation.
\begin{align}
\begin{split}
    h \in S &\quad \text{The \textit{Global State Space}}, \\
    x \in X &\quad \text{The \textit{Task Input Space}}, \\
    \theta \in P &\quad \text{The \textit{Parameter Space}}.
\end{split}
\end{align}
We let \(S\) and \(X\) be generic, reflecting the diversity of models encountered in deep learning and control. 
Here, global signifies that \(h\) and \(x\) consist of \textit{all} simultaneous variables involved in the model evaluation, rather than a specific snapshot such as the model output or activations at a particular layer. 
Tracking the full global state makes learning dynamics closed at the state level, such that the task-dependent error signal acting is the only external driving term in GD. 

\paragraph{Any Model as a System of Constraints} The state, task inputs and parameters are described by a single model-defining constraint \(\mathcal{F} : (S, X, P) \rightarrow S\). The space of admissible states in \(S\) is implicitly specified by  
\begin{align}
    \label{eqn:implicit} 
    \mathcal{F}(h, x, \theta) = 0.
\end{align}
Architectural assumptions impose corresponding algebraic structure on the constraint \(\F\). For example, for recurrent architectures, the state Jacobian \(D_h \F\) is lower-triangular in time, reflecting causal propagation, as in the following illustration. 

\subsubsection{Discrete Recurrent Example}
\label{subsec:recglobal}
Briefly, we illustrate the implicit reformulation for a generic recurrent model, to which we return throughout in order to make each additional structural result concrete. In the discrete case, let \(x \in \R^{n_x \times n_t \times n_{in}}\) denote a 3-tensor of \(n_x\) distinct task inputs. Here, each distinct task input \(x_j\) is a trajectory in \(\R^{n_{in}}\) evaluated at \(n_t\) time points. Assume a recurrent model driven by these inputs and has sequential dynamics given by
\begin{align}
    h_j(t+1) = &f(h_j(t), \,x_j(t+1), \,\theta), \\
    h_j(0) := 0,  &\text{ for } j=1,\dots,n_x; \, t = 1,\dots,n_t-1.
\end{align}
Then, this can be reformulated as a global-state constraint. Specifically, let \(X = \R^{n_x \times n_t \times n_{in}}, S = \R^{n_x \times n_t \times n_h}\) denote the input and state spaces, respectively. So, \(h \in S\) is the stacked 3-tensor of all task-conditioned hidden state evaluations. Finally, the dynamics can be equivalently formulated as 
\begin{align}
    \label{eqn:frec}
    \mathcal{F}_{rec}(h, x, \theta) := h - f( T_\downarrow h, x, \theta) =  0.
\end{align}
where \(T_{\downarrow} : S \rightarrow S \) is a linear transformation shifting times backwards by one timestep, \(T_{\downarrow} = I_{n_x} \otimes U \otimes I_{n_h}\) where \(U\) is an \(n_t\) by \(n_t\) matrix with ones on the lower diagonal.

\paragraph{Continuous Case} For a continuous time dynamical system with initial condition \(0\), evaluated at times \(t \in [0,\tau]\), the constraint takes the form \(D_t h - f(h, x, \theta) = 0\), where \(D_t : S \rightarrow S\) is the time-derivative operator. This defines the tangential dynamics of the model along each input-conditioned trajectory. In this case, \(S\) is an infinite-dimensional space consisting of bundles of input-driven trajectories in \(\R^{n_h}\), but the same structural results naturally apply in this continuous setting.

\subsection{Implicit Formulation Yields a Natural Factorization}

In this section, we define the global-state NTK (\(\NTK\)), which is an operator modeling how error signals are transformed into state corrections under gradient descent (GD). 

A model is trained by choosing a loss to minimize, \(L : S \rightarrow \R\), e.g., comparing a readout extracted from the global state to a set of desired outputs in a supervised case. Assuming \(D_h \F\) is locally invertible (i.e., the dynamics are well-posed), we define two linear operators, \(\P, \K : S \rightarrow S\) and a global error
\begin{align}
    \label{eqn:pkdef} 
    \P := (D_h \F)^{-1}, \quad \K := (D_\theta \F)(D_\theta \F)^*, \quad \text{err}(h) := \nabla_h L. 
\end{align}
Where \(*\) denotes the Hermitian adjoint (conjugate transpose in the finite case). The error, \(\err(h) \in S\) encodes immediate loss gradients. For example, if \(L(h) = \frac{1}{2} \mathbb{E}_h [\|h - h^*\|^2]\) then \(\err(h) = h - h^*\).
 
\begin{figure}[t]
    \centering
    \includegraphics[width=.99\linewidth]{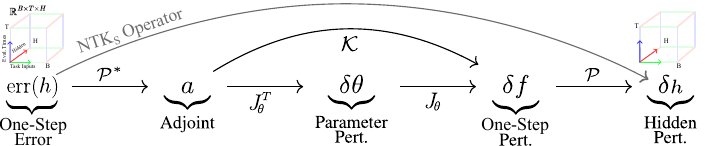}
    \caption{\textbf{Backpropagation of errors for a recurrent model, annotated by the corresponding operators, \(\P, \K\) and \(\NTK\) from Proposition~\ref{prop:ntk}}. The operator \(\P^*\) maps the global error signal to state adjoint sensitivity, describing how the loss depends on the state, \(h\). Then, these adjoints are projected into and out of the parameter space by \(\K\), potentially zeroing or misdirecting them. Finally, the corrections, which modify the one-step model dynamics, are accumulated over time to produce a correction, \(\delta f\), by forward propagating through the operator \(\P\). We show that, to first-order (small learning rate), backpropagation-through-time and the adjoint method \citep{rumelhart1986learning,pontryagin1962mathematical,chen2018neural} correspond to applying a sequence of tensor-valued operators acting on the global-state space. }
    \label{fig:schem}
\end{figure}

Let \((h, \theta)\) be a particular state-parameter pair satisfying Equation~\ref{eqn:implicit}, for which GD produces a perturbation to the model parameters in the direction of steepest descent in \(P\) (choosing learning rate \(1\) for simplicity),
\begin{align}
    \delta \theta := -\nabla_\theta L(h).
\end{align} 
According to the chain rule and implicit function theorem, we can then compute \(\delta \theta = -((D_\theta \mathcal{F})^* \cdot (D_h \F)^{-*}) (\err(h))\), and the first order correction to \(h\) is \(\delta h = ((D_h \F)^{-1} \cdot D_\theta \F)(\delta \theta)\). Combining these and plugging in Equation~\ref{eqn:pkdef} we obtain
$$ \delta h =  -(\P \cdot \K \cdot \P^*)(\err(h)). $$
Hence, \(\NTK = \P \K \P^*\) is the specific linear operator on the (typically tensor-valued) domain \(S\) mapping \(\err(h)\) to \(\delta h\). This is summarized in the following Proposition~\ref{prop:ntk} and Figure~\ref{fig:schem}.

\setlength{\fboxsep}{.02\textwidth}   

\noindent\fbox{%
    \parbox{.96\textwidth}{%
\begin{proposition}[$\NTK$ Definition]
    \label{prop:ntk}
    Consider a model in the global implicitly constrained form above,
    $$\F(h, x, \theta) = 0.$$
    Each GD step updating \(\theta\) produces a perturbation to the state, \(\delta h\), defined by 
    $$\delta h = -\NTK(\err(h)), \text{ With } \, \NTK := \P \K \P^*.$$
    Where \(\P, \K  : S \rightarrow S\) are the Propagation and Parameter linear operators, respectively, and \(\err(h) \in S\) is a global error signal, as defined in Equation~\ref{eqn:pkdef}.
\end{proposition}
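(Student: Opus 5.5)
The plan is to view the admissible state as a function of the parameters and combine the implicit function theorem with the chain rule. Fix $x$ and a pair $(h,\theta)$ with $\F(h,x,\theta)=0$. We assume $D_h\F$ is invertible at this point, as in the definition of $\P$ in Equation~\ref{eqn:pkdef}. The implicit function theorem then gives a locally unique smooth map $\theta' \mapsto h(\theta')$ with $\F(h(\theta'),x,\theta')=0$. Differentiating this identity at $\theta$ gives $D_h\F\, D_\theta h + D_\theta\F = 0$, hence
\[
J_\theta := D_\theta h = -(D_h\F)^{-1} D_\theta\F = -\P\, D_\theta \F .
\]
This is the only place where a nontrivial hypothesis enters. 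I would state invertibility, and enough smoothness of $\F$ for the implicit function theorem, as the standing assumption. For recurrent models this is automatic: $D_h\F_{rec} = I - D_hf\,T_\downarrow$ is unipotent lower block-triangular in time, so it is always invertible.

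Next I compute the GD step. The training loss is $L(h(\theta))$, so its parameter gradient is the adjoint Jacobian applied to the state error:
\[
\nabla_\theta L = J_\theta^*\,\err(h) = -(D_\theta\F)^*\P^*\err(h).
\]
Taking $\delta\theta := -\nabla_\theta L$ gives $\delta\theta = (D_\theta\F)^*\P^*\err(h)$. I will fix inner products on $S$ and $P$, Euclidean in the finite case, so that adjoints and gradients are taken with respect to the same structure. In the continuous-time case $S$ is an $L^2$ trajectory space and the adjoints are the Hilbert-space adjoints. The first-order state correction is then
\[
\delta h = J_\theta\delta\theta = -\P\, D_\theta\F\,(D_\theta\F)^*\P^*\err(h) = -\P\K\P^*\err(h),
\]
using $\K = (D_\theta\F)(D_\theta\F)^*$. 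This identifies $\NTK = \P\K\P^* = J_\theta J_\theta^*$. It therefore agrees with the NTK described in the introduction, and it is manifestly positive semidefinite.

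The main obstacle is not algebraic but lies in the bookkeeping of the two minus signs. The factor $J_\theta = -\P D_\theta\F$ appears twice, so the signs cancel in $J_\theta J_\theta^*$; the overall minus in $\delta h$ comes from the descent direction. I will also note explicitly that the identity holds only to first order in the learning rate. In infinite dimensions the adjoint $\P^*=(D_h\F)^{-*}$ must be interpreted with the appropriate boundary conditions. For the continuous recurrent case this is the backward adjoint ODE, and I would justify it by citing standard adjoint-method results rather than proving it in full.
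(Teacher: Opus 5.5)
Your proposal is correct and takes the same route as the paper: the implicit function theorem gives $D_\theta h = -\P\, D_\theta\F$, and composing the chain-rule descent step $\delta\theta = (D_\theta\F)^*\P^*\err(h)$ with this linearized state response yields $\delta h = -\P\K\P^*\err(h)$. Your sign bookkeeping is cleaner than the paper's. The paper's intermediate expressions for $\delta\theta$ and $\delta h$ each drop a minus sign, and the two slips cancel only in the final formula.
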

}}

\begin{proof}
    The proposition follows by the derivation above. In particular, from the global implicit formulation of the model, the factorization emerges naturally from the implicit function theorem and the chain rule. For specific models, however, such as the recurrent example below (Corollary~\ref{corr:recntk}), the concrete forms of \(\P\) and \(\K\) as integral operators can still be nontrivial to derive.
\end{proof}

The decomposition separates three roles: \(\K\) aggregates how parameters immediately act on the global state, \(\P\) propagates perturbations through the constrained dynamics, and \(\P^*\) propagates error signals ``backward,'' as made more explicit in the recurrent example below (Corollary~\ref{corr:recntk}) and Figure~\ref{fig:schem}.  Importantly, if either of the operators is low-rank, this reduces the rank of the full operator. This, in turn, produces an implicit bias towards a restricted space of corrections. In Theorem~\ref{thm:core}, we exploit this property, explicitly constructing \(\K\) for weight-based models and elucidating how this operator constrains the spectrum of the full NTK. 

\subsubsection{For Recurrent Models P is the Causal Propagator}

We briefly revisit the discrete recurrent example, specializing Proposition~\ref{prop:ntk} to the model in Equation~\ref{eqn:frec}. This recovers the recurrent operator decomposition developed in earlier work~\citep{hazelden2025kpflowoperatorperspectivedynamic} as a special case of the broader implicit formulation introduced here. In this setting, the propagation operator \(\P\) takes the form of a causal Green's operator, integrating one-step corrections over time, while \(\K\) is induced by the one-step parameter Jacobians.

\setlength{\fboxsep}{.02\textwidth}   

\noindent\fbox{%
    \parbox{.96\textwidth}{%
\begin{corr}[Recurrent \(\NTK\)]
\label{corr:recntk}
Consider the discrete recurrent model 
\begin{align*}
    \F_{rec}(h, x, \theta) &= h - f( T_{\downarrow} h, x, \theta), \\
     \text{ Where } \, h \in S := \R^{n_{x} \times n_t \times {n_h}}&, x \in X := \R^{n_x \times n_t \times n_{in}}, \theta \in P.
\end{align*}
Then
\[
\P = (I - D_h f \cdot T_{\downarrow})^{-1},
\qquad
\K = (D_\theta f)\cdot(D_\theta f)^*.
\]
Specifically, \(\P\) explicitly acts on a collection of tangential perturbations, \(\delta f \in S\) by 
\[
(\P q)_j(t) = \sum_{s\le t}\Phi_j(t,s) \cdot \delta f_j(s).
\]
Here, \(\Phi_j(t,s) = D_{h_j(s)} \, h_j(t)\) is the state-transition, describing linear propagation from time \(s\) to \(t\), on a fixed task input \(j\). Expanding \(\NTK = \P \K \P^*\) thus yields exact corrections, 
\begin{align*}
\delta h_j(t) = -\sum_{t_0=1}^T \underbrace{\Phi_j(t, t_0)}_{{\color[HTML]{323031} \substack{\text{Forward} \\ \text{Propagator}}}} 
\sum_{j_1=1}^{n_x} \Big[ \sum_{t_1=1}^T \underbrace{D_\theta f(h_j(t_0)) D_\theta f(h_{j_1}(t_1))^*}_{{\color[HTML]{1B9AAA} \text{Parameter Kernel}}} \sum_{t_2=t_1}^T \underbrace{\Phi_{j_1}(t_2, t_1)^T}_{{\color[HTML]{323031} \substack{\text{Backward} \\ \text{Propagator}}}} \underbrace{\err(h_{j_1}(t_2))}_{{\color[HTML]{DB2955} \text{Error}}} \Big].
\end{align*}
\end{corr}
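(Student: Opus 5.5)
The plan is to specialize Proposition~\ref{prop:ntk} to $\F_{rec}(h,x,\theta) = h - f(T_\downarrow h, x, \theta)$ by computing its two partial derivatives. Here $f$ is applied pointwise in $(j,t)$, so $D_h f$ denotes the block-diagonal operator on $S$ whose $(j,t)$ block is the one-step Jacobian $A_j(t) := \partial_h f(h_j(t-1), x_j(t), \theta)$. By the chain rule, $D_h \F_{rec} = I - D_h f \cdot T_\downarrow$ and $D_\theta \F_{rec} = -D_\theta f$. Hence $\P = (I - D_h f\, T_\downarrow)^{-1}$, and $\K = (-D_\theta f)(-D_\theta f)^* = (D_\theta f)(D_\theta f)^*$; the signs cancel.

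Next I will show that $\P$ exists and is the causal Green's operator. Since $T_\downarrow = I_{n_x}\otimes U \otimes I_{n_h}$ with $U$ strictly lower triangular, the product $N := D_h f\, T_\downarrow$ is block strictly lower triangular in time and block diagonal in $j$. Therefore $N^{n_t}=0$, the inverse $(I-N)^{-1} = \sum_{k=0}^{n_t-1} N^k$ is a finite Neumann series, and $D_h\F$ is invertible, so the well-posedness assumption holds automatically. Reading off the $(t,s)$ block of $N^{t-s}$ gives the ordered product $A_j(t)A_j(t-1)\cdots A_j(s+1)$ for $s\le t$ and zero otherwise. This equals $\Phi_j(t,s)=D_{h_j(s)}h_j(t)$ by the chain rule applied to the recursion. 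Equivalently, $q=\P\,\delta f$ solves $q_j(t) = A_j(t) q_j(t-1) + \delta f_j(t)$, and unrolling this recursion gives $(\P\, \delta f)_j(t) = \sum_{s\le t}\Phi_j(t,s)\delta f_j(s)$.

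For the explicit correction formula, I will take adjoints blockwise. The adjoint $\P^*$ has $(s,t)$ block $\Phi_j(t,s)^T$ for $t\ge s$, so $(\P^*\err)_{j_1}(t_1) = \sum_{t_2\ge t_1}\Phi_{j_1}(t_2,t_1)^T \err(h_{j_1}(t_2))$. Next, $\K$ has $((j,t_0),(j_1,t_1))$ block $D_\theta f(h_j(t_0))\,D_\theta f(h_{j_1}(t_1))^*$, where $D_\theta f(h_j(t_0))$ is the $(j,t_0)$ row block of $D_\theta f : P\to S$. Composing $\P$, then $\K$, then $\P^*$, summing over the intermediate indices, and including the minus sign from Proposition~\ref{prop:ntk} yields the stated triple sum.

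The main obstacle is bookkeeping, not analysis. I need to confirm that the product $D_h f\cdot T_\downarrow$ is ordered correctly, so that the Jacobian is evaluated at the shifted state $T_\downarrow h$. I also need to confirm that the time-index ranges line up, in particular the initial condition $h_j(0)=0$ and the first-time-step convention. If the indices in the displayed formula turn out to be off by one relative to $U$, I will fix the convention that block $t$ of $D_h f$ multiplies $h(t-1)$ and check it against the $n_t=2$ case directly.
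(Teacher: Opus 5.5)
Your proposal is correct and follows the paper's route: specialize Proposition~\ref{prop:ntk} by computing $D_h\F_{rec} = I - D_h f\, T_\downarrow$ and $D_\theta\F_{rec} = -D_\theta f$, then read off the block structures of $\P$, $\K$, and $\P^*$ and compose them. The paper leaves the identification of $\P$ with the causal state-transition operator implicit and defers it to prior work; your finite Neumann series for the nilpotent $D_h f\, T_\downarrow$ fills this in cleanly, and it also shows that invertibility of $D_h\F_{rec}$ is automatic here rather than an assumption.
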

}}

\paragraph{Synopsis} For recurrent models, \(\P\) is lower-triangular in time, integrating state-space corrections forward through the model dynamics (the causal Green's function). \(\K\) captures how parameterization constrains the range of perturbations to the one-step dynamics, \(f\). Finally, \(\P^*\) maps the global error signal to an adjoint sensitivity by backpropagation through the Jacobian transpose \((D_h f)^*\). The operator \(\K\) then converts this sensitivity into parameter-induced corrections to the dynamics. Finally, \(\P\) propagates these corrections forward through the model, yielding the effective state correction \(\delta h\) (Figure~\ref{fig:schem}). Taken together, these operators recover the familiar structure of backpropagation through time and adjoint-based sensitivity propagation in discrete and continuous models, respectively \citep{werbos1990bptt, chen2018neural}.

\section{The Core K is a Computable Gram Matrix for Weight-Based Models}

\label{sec:core}

We now construct $\K$ in Proposition~\ref{prop:ntk} explicitly for weight-based models. Recall that \(\NTK\) is defined as 
\begin{align}
    \NTK = \P \K \P^*,
    \qquad
    \P = (D_h \F)^{-1},
    \qquad
    \K = (D_\theta \F)(D_\theta \F)^* .
\end{align}
Notably, a model may be defined by complex, nonlinear state-to-state dependencies, but still have simple immediate parameter-to-state interactions. This observation motivates the theorem below. Specifically, we show that for any weight-based model, $\K$ is a Gram matrix of forward-computable quantities.

\vspace{.5em}

\setlength{\fboxsep}{.02\textwidth}   

\noindent\fbox{%
    \parbox{.96\textwidth}{%
\begin{theorem}[Universal Kronecker core]
\label{thm:core}
Let a model be defined implicitly by \(\F(h,x,\theta)=0\), where $h \in S$ is the global state. Assume the model is weight-based, in the sense of Appendix~\ref{def:weighttied}, and that the associated lifted constraint system is locally well-posed, meaning the model can be evaluated explicitly. Let \(V \in \R^{k\times m}\) be the corresponding matrix of weight-sites. Then \(\NTK\) factors as  
\begin{align}
    \NTK
    \cong
    \P_{\mathrm{core}}
    \bigl(VV^* \otimes I_n\bigr)
    \P_{\mathrm{core}}^*,
\end{align}
where \(\P_{\mathrm{core}}\) is the induced propagator on \(S\), and \(\cong\) denotes equality up to re-ordering of the relevant axes. 
\end{theorem}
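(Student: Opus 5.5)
The plan is to pass to a \emph{lifted} constraint system in which every weight enters only linearly, through a product $W v$ with a weight-site variable $v$. Proposition~\ref{prop:ntk} then applies to the lifted system, and the core $\K$ can be computed directly.

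\textbf{Lifting.} Following the weight-based definition in Appendix~\ref{def:weighttied}, I will assume the model can be written with parameters $\theta=(W_1,\dots,W_L)$, where $W_\ell\in\R^{n\times m_\ell}$ after padding rows to a common output width $n$, or per-block $n_\ell$ handled identically. I introduce auxiliary state variables $z_\ell = W_\ell v_\ell(h,x)$ for each weight site. The lifted global state is $\tilde h=(h,z)\in\tilde S$, with constraint
\[
\tilde\F(\tilde h,x,\theta)=\bigl(G(h,z,x),\; z - \mathcal{W}(\theta)\,v(h,x)\bigr),
\]
where $G$ no longer depends on $\theta$, and $v$ stacks all site activations, over batch and time as well, into $V\in\R^{k\times m}$. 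Here $k$ indexes site instances (input/time/position) and $m$ the fan-in. Local well-posedness of the lifted system gives invertibility of $D_{\tilde h}\tilde\F$. Because $\delta z$ together with the map $z\mapsto h$ recovers the original correction, I define $\P_{\mathrm{core}}$ as the restriction of $(D_{\tilde h}\tilde\F)^{-1}$ to the $z$-block. I then check by the implicit function theorem that $\P\K\P^*$ for the original model equals $\P_{\mathrm{core}}\K_{\mathrm{core}}\P_{\mathrm{core}}^*$, where $\K_{\mathrm{core}}=D_\theta\tilde\F\,(D_\theta\tilde\F)^*$ restricted to the $z$-coordinates. The key point is that $D_\theta\tilde\F$ is supported only on the $z$-block, so the sandwich collapses to that block.

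\textbf{Computing the core.} For a single weight matrix $W$ shared across all $k$ site instances, $z_i=Wv_i$, so $D_W z:\ \delta W\mapsto (\delta W v_i)_{i}$. In vectorized form this is $Z=VW^\top$, hence $\delta Z = V\,\delta W^\top$, and the operator is $V\otimes I_n$ under the standard identification $\vec(AXB)=(B^\top\otimes A)\vec X$. Its adjoint sends $E\in\R^{k\times n}$ to $V^* E$, so the composition is $E\mapsto VV^*E$, which is $VV^*\otimes I_n$ up to axis ordering. Weight tying across time and batch, as in an RNN, is automatically absorbed because the site index $k$ ranges over all instances. For several distinct weight matrices the core is block diagonal. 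I will either state it blockwise, or concatenate sites when weights act on concatenated inputs, e.g.\ $[x;h]$ for an RNN, giving $V=[X\;H]$. Biases are handled by appending a constant $1$ column to $V$.

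\textbf{Main obstacle.} I expect the hardest step to be the first one: making the lifting rigorous so that $\cong$ is an honest equality. Concretely, I need to show that the original $\NTK$ on $S$ equals the pullback of the lifted operator, i.e.\ that $\P_{\mathrm{core}}$ is well defined. The issue is nesting, where sites $v$ themselves depend on earlier $z$'s (e.g.\ transformer attention outputs). I would handle this by writing $D_{\tilde h}\tilde\F$ in block form and applying a Schur-complement argument: eliminating $z$ recovers $D_h\F$, and eliminating $h$ yields the propagator from $z$-perturbations to $h$, which I define as $\P_{\mathrm{core}}$. Its invertibility follows from well-posedness, i.e.\ the explicit evaluation order is triangular. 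The remaining bookkeeping is to fix the reordering permutation of the axes (sites $\times$ output units) that realizes $\cong$.
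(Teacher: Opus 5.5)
Your proposal is correct and follows essentially the same route as the paper. Both lift so that the weights enter only through a single linear constraint (your $z - \mathcal{W}(\theta)v = 0$, the paper's $Q - VW^\top = 0$), observe that its $W$-derivative is $-(V \otimes I_n)$ supported on that one block so the sandwich collapses to the (state, $Q$) block of the inverse lifted Jacobian, and justify that block with a Schur-complement argument in which eliminating the site variables recovers $D_h \F$; the paper merely carries $V$ as an extra state variable defined implicitly by $\mathcal{G}_V(h,V,Q,x)=0$, whereas you treat the sites as a function $v(h,z,x)$, which is equivalent under the explicit-evaluation assumption.
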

}}

\begin{proof}
Here, we intuitively outline the three steps of the proof, leaving the full details to the Appendix~\ref{proof:core}. The procedure consists of (1) making the parameter structure explicitly by \textit{lifting} the constraint to a higher-dimensional state space, (2) computing the NTK in this space, (3) projecting back to the original state space \(S\).

\textbf{(1) Lifted Constraint} 
The weight-based assumption corresponds to a larger system of constraints where the parameter dependence appears only through explicit matrix-vector products on intermediate weight-site variables. Specifically, we introduce a larger global state space, \(S_{aug} = (S, \R^{k \times m}, \R^{k \times n})\) and the augmented state
\begin{align}
    h_{\mathrm{aug}}=(h,V,Q),
\end{align}
where $V \in \R^{k \times m}$ are the weight sites and \(Q = V W^\top\) is the image of \(V\) under \(W\). Here \(k\) is the number of instances where the weights enter \(\F\). This yields an equivalent constraint system, 
\begin{align}
    \F_{\mathrm{aug}}(h_{\mathrm{aug}},x,W)=0.
\end{align}
whose restriction to the original $h$-coordinates reproduces the original model. This step is necessary to incorporate the simplifying assumption that any weights appear inside the computation as matrix-vector products. 

\textbf{(2) Applying Proposition~\ref{prop:ntk} to the Lifted System} 
Using this lifted system of constraints, we can apply Proposition~\ref{prop:ntk} to compute \(\NTK\) for the lifted state space, \(S_{aug}\). Since the larger system of constraints only reflects the parameter dependence through the matrix-vector products \(Q - V W^\top = 0\), the corresponding parameter operator can be written as a 3-by-3 block matrix (in the lifted state space) as 
\begin{align}
    \K_{\mathrm{aug}}
    =
    (D_W \F_{\mathrm{aug}})(D_W \F_{\mathrm{aug}})^*
    =
    \begin{pmatrix}
        0 & 0 & 0 \\
        0 & 0 & 0 \\
        0 & 0 & VV^* \otimes I_n
    \end{pmatrix}.
\end{align}
Next, the operator \(\P_{aug} = (D_{h_{aug}} \F_{aug})^{-1}\) is resolved by inverse of a 3-by-3 block matrix.

\textbf{(3) Project to the Original State}
The lifted NTK acts on \((S, \R^{k \times m}, \R^{k \times n})\), so the upper left block matrix corresponds to the original \(\NTK\) operator acting on \(S\). Computing this upper block by restriction results in
\begin{align}
    \NTK
    =
    \P_{\mathrm{core}}
    (VV^* \otimes I_n)
    \P_{\mathrm{core}}^*.
\end{align}
where \(\P_{\mathrm{core}}\) can be resolved exactly; full details are given in Equation~\ref{eqn:newp} in the Appendix.
\end{proof}

\paragraph{Interpretation}
For simpler notation, throughout the rest of the paper we drop the \({\mathrm{core}}\) prefix. Theorem~\ref{thm:core} gives an equivalent decomposition of \(\NTK\) in which the explicit \emph{weight-based} parameter structure makes \(\K\) concrete and computable. In particular, the theorem applies to models whose trainable parameters, after lifting if necessary, enter through products of the form \(Wv\) for a shared weight matrix \(W \in \R^{n \times m}\); we refer to the corresponding internal vectors \(v\) as the \emph{weight-sites}. Thus, \(\P\) is not a new propagator unrelated to Proposition~\ref{prop:ntk}, but the effective state-to-state propagator induced on \(S\) by this lifted decomposition. This applies across a broad class of architectures, including nonlinear, recurrent, gated, implicit, and attention-based models, since the theorem depends only on how the weights enter the computation. A formal definition is given in Appendix~\ref{def:weighttied}, and concrete instances for the RNN, GRU, and transformer are given in Appendix~\ref{subsec:corederivap}. The proof procedure--lifting under simplifying assumptions, using Schur complements to compute \(\P\) as an inverse, and then projecting back--also provides a general strategy for imposing additional restrictions on the model structure.

The factor \(VV^*\) records covariances of activity across the weight-sites, and is therefore directly computable from forward-pass quantities. Moreover, it is readily interpretable: as made explicit in the example below, its principal components correspond to dominant temporal modes across time and task inputs. The spatial Gram matrix \(V^*V\) is also closely related to quantities that appear in many prior studies, such as hidden-state or input covariance, and has the same effective rank as \(VV^*\). Thus, systems with low-dimensional structure in their inputs and activity--as, for example, widely observed in neural systems \citep{Recanatesi2022ScaleDependentDimensionality, Cunningham2014DimensionalityReduction, farrell2023from, Morales2023IntrinsicDynamics}--will display bottlenecks in the rank of the full NTK, as made explicit in Proposition~\ref{prop:spacetime}. For many concrete models, including those treated in Appendix~\ref{subsec:corederivap}, the weight-sites are straightforward to identify, so the factorization can often be applied directly without explicitly repeating the general lifting and projection argument.

We note again that \citep{pellegrino2023low} isolate quantities related to those in Theorem~\ref{thm:core}, and use them to derive elegant bounds on the spectrum or rank of RNN weight updates and, in some cases, RNN state updates, closely related to the bottlenecking ideas explored here. Specifically, \citep{pellegrino2023low} show how the singular values of updates are bounded by singular values of the adjoint, related to our \(\P\) operator, and state covariances, related to our \(\K\). Theorem~\ref{thm:core} enables analogous concepts to apply to a broader set of models and, as we will show below, invites a separate treatment of temporal and spatial rank concepts with many applications.

The importance of Theorem~\ref{thm:core} for the remainder of the paper is that it standardizes \(\NTK\) into two interacting objects: a model-dependent propagator \(\P\) and a universal weight-site Gram matrix \(VV^*\). This theorem underlies the findings below. In particular, it lets us distinguish temporal and spatial bottlenecks, derive rank constraints on GD updates, and explain why GD is preferentially steered along modes already present in the current collection of weight-sites, which we call \emph{self-referential bias}.

\subsection{Example: Recurrent Model with Input and Hidden Weights}
\label{subsec:core_rnn_example}

\begin{wrapfigure}{r}{0.35\textwidth}
\vspace{-2.0em}
\centering
\includegraphics[width=0.31\textwidth]{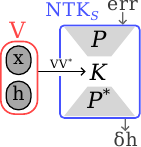}
\caption{Schematic of Theorem~\ref{thm:core} for the Example~\ref{subsec:core_rnn_example}, with \(V = \text{cat}(H,X)\), as is the case for the GRU and RNN. If the hidden units and inputs have low-dimensional activity, then this joint state matrix bottlenecks the full NTK.}
\label{fig:wrapped}
\vspace{0.5em}
\end{wrapfigure}

 For the discrete recurrent example, consider the discrete-time system \(h_{t+1} = f(h_t,\; W_{\mathrm{rec}} h_t,\; W_{\mathrm{in}} x_{t+1})\), with parameters \(\theta = \mathrm{cat}\!\left(\vec(W_{\mathrm{rec}}),\vec(W_{\mathrm{in}})\right)\), where \(\cat\) forms the direct sum concatenation. Collecting the weights into the block matrix \(W = \mathrm{blockdiag}(W_{\mathrm{rec}},W_{\mathrm{in}})\), the model depends on \(W\) only through the concatenated state-input vectors \(v_t = \mathrm{cat}(h_t,x_{t+1})\) (Figure~\ref{fig:wrapped}). These are precisely the weight-sites, so the corresponding matrix is
\[
V = \mathrm{cat}(H,X)
    \in \R^{n_x n_t \times (n_{\mathrm{h}} + n_{\mathrm{in}})}.
\]
Thus, in the notation of Theorem~\ref{thm:core}, we have \(k = n_x n_t\), \(m = n_{\mathrm{h}} + n_{\mathrm{in}}\), and \(n = 2n_{\mathrm{h}}\). Theorem~\ref{thm:core} therefore yields
\begin{align}
    \NTK
    =
    \P
    \bigl(VV^* \otimes I_{2n_{\mathrm{h}}}\bigr)
    \P^*.
\end{align}

Here, the Gram matrix has explicit structure, composed of all contracted inner products of the hidden and input states. Specifically, we can write 
$$(V V^*)_{j, t; j', t'} = \langle h_j(t), h_{j'}(t')\rangle + \langle x_j(t), x_{j'}(t')\rangle$$
Explicitly, it is an uncentered covariance matrix corresponding to the joint hidden-input activity.
Thus, the matrix \(V V^*\) acts like an (unnormalized) projector onto the dominant modes of the joint hidden-input activity over all timesteps and batches. Low-rank structure in \(\mathrm{cat}(h,x)\) directly bottlenecks the accessible NTK corrections, giving rise to low-rank, biased learning, as formalized in the next section.

\paragraph{Freezing Parameters} Finally, note that \(\K\) captures the choice of dynamical parameters and \(\P\) the linearized model dynamics. Specifically, if we choose to fix the hidden weights, the core is reduced to the form \(V V^* = X X^*\), reflecting that only the input weights are trained. Likewise if we fix the input weights, the core becomes \(V V^* = H H^*\), the hidden state Gram matrix (motivating Figure~\ref{fig:regions}). In either case, however, \(\P\) does not change. On the other hand, if a model has the same mechanism for parameter entry but different linearized dynamics, the computational form of \(\K\) remains identical while \(\P\) changes. This shows that \(\P\) and \(\K\) can be modified effectively independently of one another, with very different roles.



\subsection{GD is Self-Referentially Biased Towards Core Modes in V}

Theorem~\ref{thm:core} shows that the \(\NTK\) is mediated by the
current weight-site state \(V\). Thus, gradient descent is not equally
responsive to all error directions: it learns most effectively through
modes represented in the model's current weight-site feature matrix.
We refer to this mechanism as \textbf{self-referential bias}, where the model's current activity pattern determines which corrections are accessible to the optimizer.

The corollary below shows that \(\langle \NTK \err(h), \err(h)\rangle_F\), the rate at which the loss decreases under one GD step, depends on how \(\mathrm{adj}(h)\), defined below, aligns with \(V\), over all time and trials. When \(\P\) is roughly isotropic (such as with low recurrence), this reduces to direct
overlap between the error and the weight-sites. By contrast, when \(\P\) is anisotropic (such as when there are exploding gradients in a single dominant direction),
it compounds with \(\K\) to further reduce the rank of \(\NTK\) and effective bias of learning, as explored in
Section~\ref{subsec:regimes}. This implies that the alignment between the task target (or individual SVD modes of the target) and the core, \(V\), can be a useful and easy to compute proxy for how much the NTK aligns with the particular task. This relationship is explored computationally in Experiment~\ref{subsec:grubias}. 

\setlength{\fboxsep}{.02\textwidth}
\noindent\fbox{%
    \parbox{.96\textwidth}{%
\begin{corr}[Self-referential bias via adjoint alignment]
\label{corr:adjointalign}
Let \(\P\), \(\K\), and \(\err\) be as in Proposition~\ref{prop:ntk}, and
define the global adjoint state by
\[
\text{adj}(h) := \P^*(\err(h)).
\]
If, moreover, \(\K = V V^* \otimes I_n\) as in Theorem~\ref{thm:core}, then
\[
\langle \NTK \err(h), \err(h)\rangle_F
= \|(V^* \otimes I_n)\text{adj}(h)\|_F^2.
\]
Thus, at each iteration, \(V\) acts as a filter, removing any components of \(\mathrm{adj}(h)\) orthogonal to \(V\), so that GD it restricted to corrections aligned with the weight-sites.
\end{corr}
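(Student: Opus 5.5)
The plan is a direct computation with adjoints, starting from the factorization in Proposition~\ref{prop:ntk}. Write \(e := \err(h)\) and use the Frobenius inner product on \(S\), with respect to which \(\P^*\) is the adjoint of \(\P\). First, move \(\P\) across the inner product:
\[
\langle \NTK e, e\rangle_F = \langle \P \K \P^* e, e\rangle_F = \langle \K \P^* e, \P^* e\rangle_F = \langle \K\, \mathrm{adj}(h), \mathrm{adj}(h)\rangle_F .
\]
This step uses only the definition \(\mathrm{adj}(h) = \P^*(\err(h))\).

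Next, substitute \(\K = VV^*\otimes I_n\) from Theorem~\ref{thm:core}. By the mixed-product property of Kronecker products,
\[
VV^*\otimes I_n = (V\otimes I_n)(V^*\otimes I_n),
\]
and \((V\otimes I_n)^* = V^*\otimes I_n\). Therefore, with \(a := \mathrm{adj}(h)\),
\[
\langle \K a, a\rangle_F = \langle (V^*\otimes I_n)a, (V^*\otimes I_n)a\rangle_F = \|(V^*\otimes I_n)a\|_F^2 ,
\]
which is the claimed identity.

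The only point needing care is bookkeeping. Theorem~\ref{thm:core} holds only up to reordering of axes (\(\cong\)) and with the induced propagator \(\P_{\mathrm{core}}\). So \(\mathrm{adj}(h)\) must be read as \(\P_{\mathrm{core}}^*\err(h)\), viewed in the \(k\times n\) layout on which \(VV^*\otimes I_n\) acts. Concretely, identify \(a\) with a matrix \(A\in\R^{k\times n}\); then \((V^*\otimes I_n)a\) corresponds to \(V^*A\) (up to the same permutation). A permutation \(\Pi\) of axes is orthogonal, so conjugating by it (\(\K = \Pi^*(VV^*\otimes I_n)\Pi\)) leaves both the inner product and the Frobenius norm unchanged, and the identity survives the reordering. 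I expect this identification of spaces to be the main obstacle. It amounts to checking that the Frobenius inner product on the lifted \(Q\)-block restricts compatibly, which is immediate once \(\P_{\mathrm{core}}\) is taken as the map into that block.

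The filtering interpretation follows from the identity. Write \(a = a_\parallel + a_\perp\), where \(a_\perp\) has its \(k\)-index component in \(\ker V^* = (\operatorname{range} V)^\perp\). Then \((V^*\otimes I_n)a_\perp = 0\), so only the component of the adjoint lying in the span of the weight-site modes contributes to the first-order loss decrease. Expanding in an SVD \(V=\sum_i \sigma_i u_i w_i^*\) gives
\[
\|(V^*\otimes I_n)a\|_F^2 = \sum_i \sigma_i^2 \,\|(u_i^*\otimes I_n)a\|^2 ,
\]
so each adjoint component is weighted by the squared singular value of the corresponding weight-site mode.
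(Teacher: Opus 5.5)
Your proposal is correct and is essentially the paper's argument: substitute the factorization, split \(VV^*\otimes I_n=(V\otimes I_n)(V^*\otimes I_n)\), and move \(\P(V\otimes I_n)\) across the Frobenius inner product as its adjoint to obtain \(\|(V^*\otimes I_n)\mathrm{adj}(h)\|_F^2\). Your extra bookkeeping, which reads \(\mathrm{adj}(h)\) as \(\P_{\mathrm{core}}^*\err(h)\) in the \(k\times n\) layout of the \(Q\)-block, correctly makes explicit what the paper leaves implicit when it drops the core subscript.
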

}}
\begin{proof}
This is a direct consequence of Theorem~\ref{thm:core}. Specifically, the NTK alignment expands as  
$$\langle \NTK \err(h), \err(h) \rangle_F = \langle \P (V V^* \otimes I_n)  \P^*(\err(h)), \err(h) \rangle_F \, .$$
We can pass \(\P (V \otimes I_n)\) to the right side of the inner product by taking its adjoint, yielding
$$ = \langle (V^* \otimes I_n) \P^*(\err(h)), (V^* \otimes I_n)\P^*(\err(h))\rangle_F \, .$$
Finally, from the definition of \(\text{adj}(h)\), this equals \(\|(V^* \otimes I_n) \text{adj}(h)\|^2_F\).
\end{proof}

\paragraph{Remarks} Corollary~\ref{corr:adjointalign} gives an operational form of
self-referential bias. The adjoint operator \(\P^*\) transports the error
signal into state-space, producing the global adjoint state
\(\text{adj}(h)\). Then, the Kronecker core measures the 
overlaps of this state with the span of the current weight-sites. The geometry of
\(V\) already determines which error directions can effectively drive
learning. If \(\text{adj}(h)\) has little overlap with the dominant modes of
\(V\), learning stalls, corresponding to a poorly conditioned \(\NTK\) for
the task.

In the discrete recurrent example above, \(V=\mathrm{cat}(h,x)\). This means that if the joint hidden-input state is concentrated in a few dominant modes over time, trials, or hidden units, then GD is correspondingly biased to learn through those same modes. For example, if \(V\) corresponds to low frequency dynamics over all time and trials, \(\NTK\) effectively cancels any high-frequency components of the target. If the activity of \(V\) drops off in a particular time period (e.g., in the response period in the Memory-Pro task below), then parts of the target in this range are not learned. Further, if \(V\) corresponds to low-rank latent dynamics in space, meaning that it can be projected to a low-dimensional PCA space, then corrections are constrained by the low-rank Gram matrix \(V V^T\), as we study in the next section. As a result, we define this as the basic mechanism of self-referential bias, potentially explaining more specific observed phenomena, including inductive bias of GD towards low-frequency parts of a task, or neural collapse, biasing GD towards low-dimensional latent dynamics~\citep{rahaman2019spectral, cao2021towards, Farrell2022,RecanatesiFarrell2021,farrell2023from,Freedman2006ExperienceDependent}. 


\subsection{Necessary Requirements for Low-Rank Learning (Over Space and Time)}
\label{subsec:spacetime}

We have refrained from vectorizing the global-state tensor \(h\) throughout. Here, we show that, in conjunction with Theorem~\ref{thm:core}, this allows us to define multiple reduced views of the NTK operator, capturing its dominant behavior over hidden units (space) or batches and timesteps (time). The core \(V V^* \otimes I_n\) from Theorem~\ref{thm:core} acts non-identically only on the temporal part of this view. Importantly, Proposition~\ref{prop:spacetime} shows that if \textit{either} the reduced spatial or temporal view of the NTK is low-rank, then the updates \(\delta h\) it produces are low-rank in \textit{both} axes. Thus, high-rank learning necessitates that both the spatial and temporal ranks of the NTK are high. We will undertake a concrete application of this in Section~\ref{subsec:regimes} below, where we explore distinct regimes in which either the spatial or temporal rank of the NTK is the limiting factor.

\paragraph{Space-Time Reduced Views of the Global-State NTK} Theorem~\ref{thm:core} shows that \textit{any} weight-based model exhibits a decomposition of the \(\NTK\) over the tensor product domain \(\R^{k} \otimes \R^{n}\). Here, \(k\) indexes the vectorized batch-time axis, \(k = n_x \cdot n_t\), while \(n\) is the physical neuron dimension. Using this as motivation, we call the left factor of such a tensor product the \textit{temporal part} and the right factor the \textit{spatial part}. Specifically, we define two reduced operators,
\begin{align}
\label{eqn:ntkview}
\NTK^{temp} := \frac{\text{tr}_n(\NTK)}{n}  \in \R^{k \times k} \, ; \quad \NTK^{space} := \frac{\text{tr}_k(\NTK)}{k} \in \R^{n \times n}\,.
\end{align}
where \(\text{tr}_n\) denotes the \textit{partial trace}, effectively summing along one axis of the operator. Note that \(\NTK\) is positive semidefinite, so this corresponds to a reduced second-moment view of the original global-state parameter sensitivity. In Appendix~\ref{sec:aprnla}, we detail how these operators can be implemented efficiently in a matrix-free way. These views of the NTK apply to any weight-based model covered by Theorem~\ref{thm:core}. Indeed, we compute these ranks for a non-recurrent self-attention transformer in Section~\ref{subsec:transformer} below.


\paragraph{Illustration} We first give a simple demonstration of the relationship between low-rank spatial and temporal properties via the operator \(\K\), which is always Kronecker separable by Theorem~\ref{thm:core}. Specifically consider a one-dimensional case where a single temporal mode dominates, \(\K = v v^T \otimes I_n\), where \(v \in \R^k\) is a vector, representing the same pattern of activity shared across all the joint hidden-input activity. Then, the operator is rank one temporally, but full rank spatially. Let \(\err(h) \in \R^{k \times n}\). Define \(q := v^T \err(h) \in \R^{n}\). Then,
\[
(v v^T \otimes I_n)(\err(h)) = v q^T \in \R^{k \times n},
\]
so every row of the output is a scalar multiple of \(q\). Thus, all updates lie in a one-dimensional spatial subspace, exploring only a single direction in \(\R^n\). In other words, the low-rank temporal structure of \(v v^T \otimes I_n\) implies a spatial bottleneck on every output of this operator. 

Next, we generalize and formalize this phenomenon in Proposition~\ref{prop:spacetime}.

\setlength{\fboxsep}{.02\textwidth}
\noindent\fbox{%
    \parbox{.96\textwidth}{%
\begin{proposition}[Space-Time \(NTK\) Bottleneck]
\label{prop:spacetime}
    In the context of Theorem~\ref{thm:core}, \(\NTK\) is an operator acting on \(\R^{k \times n}\), and we can define its reduced temporal and spatial views (Equation~\ref{eqn:ntkview}). Let \(\err(h) \in \R^{k \times n}\) denote any input error signal, and let \(\delta h = \NTK(\err(h))\) be the induced correction. Then,
    \[
    \rank(\delta h) \leq \min\!\bigl(\rank(\NTK^{space}), \rank(\NTK^{temp})\bigr),
    \]
    where \(\rank(\delta h)\) is the matrix rank of \(\delta h \in \R^{k \times n}\), bounding both its spatial and temporal expressiveness.
\end{proposition}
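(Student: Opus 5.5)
The plan is to prove the bound using only the fact that \(\NTK\) is positive semidefinite on \(\R^{k}\otimes\R^{n}\cong\R^{k\times n}\). The Kronecker form of Theorem~\ref{thm:core} is needed only to fix this tensor-product structure (after the reordering \(\cong\)). Positive semidefiniteness holds because \(\NTK=\P\K\P^*\) with \(\K=(D_\theta\F)(D_\theta\F)^*\succeq 0\), or equivalently \(\K=VV^*\otimes I_n\succeq0\). So we may write a spectral (or any Gram) decomposition
\[
\NTK=\sum_{i} a_i\,\langle a_i,\cdot\rangle_F,\qquad a_i\in\R^{k\times n}.
\]
Each \(a_i\) is treated as a \(k\times n\) matrix, with rows indexed by batch-time and columns by neurons.

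The first step is to compute the two partial traces in these terms. I expect
\[
\operatorname{tr}_n(\NTK)=\sum_i a_i a_i^{\top}\in\R^{k\times k},\qquad \operatorname{tr}_k(\NTK)=\sum_i a_i^{\top}a_i\in\R^{n\times n}.
\]
This is a one-line index check: \(\operatorname{tr}_n\) contracts the neuron index of the operator, which turns the rank-one term \(a_i\langle a_i,\cdot\rangle\) into \(a_ia_i^\top\), and symmetrically for \(\operatorname{tr}_k\). The normalizations \(1/n\) and \(1/k\) in Equation~\ref{eqn:ntkview} do not affect rank.

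The second step uses the standard fact that for a sum of PSD matrices, \(\operatorname{range}(\sum_i a_ia_i^\top)=\sum_i \operatorname{range}(a_ia_i^\top)=\sum_i \operatorname{col}(a_i)\). Call this subspace \(U_t\subseteq\R^k\). It contains the column space of every \(a_i\), and \(\dim U_t=\rank(\NTK^{temp})\). Likewise \(U_s:=\operatorname{range}(\sum_i a_i^\top a_i)=\sum_i\operatorname{row}(a_i)\subseteq\R^n\) has dimension \(\rank(\NTK^{space})\).

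The third step writes the correction as \(\delta h=\NTK(\err(h))=\sum_i c_i a_i\) with scalars \(c_i=\langle a_i,\err(h)\rangle_F\). Every column of \(\delta h\) then lies in \(U_t\) and every row lies in \(U_s\). Hence \(\rank(\delta h)\le\dim U_t\) and \(\rank(\delta h)\le \dim U_s\), which gives the claimed minimum. It is worth remarking that this recovers the Illustration: there \(\K=vv^\top\otimes I_n\), so \(U_t=\operatorname{span}(v)\) and the rank is at most one.

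The main obstacle is bookkeeping rather than depth. One must make sure the partial traces are taken with respect to the same \(\R^k\otimes\R^n\) factorization in which \(\delta h\) is viewed as a \(k\times n\) matrix, since Theorem~\ref{thm:core} holds only up to axis reordering. One must also justify the range identity for sums of PSD matrices, via \(x^\top(\sum_i a_ia_i^\top)x=0\iff a_i^\top x=0\) for all \(i\). I would state the reordering convention explicitly at the outset and prove the range identity as a short lemma before the main argument.
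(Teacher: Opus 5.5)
Your proof is correct and is essentially the paper's argument in a different presentation. Both proofs use only positive semidefiniteness to show that the column space of \(\delta h\) lies in \(\im(\NTK^{temp})\) and its row space lies in \(\im(\NTK^{space})\). The paper gets there by showing that every \(u\in\ker(\NTK^{temp})\) forces \(\NTK(ue_j^\top)=0\), and hence \(u^\top\delta h=0\). You get there by writing \(\NTK=\sum_i a_i\langle a_i,\cdot\rangle_F\), so that \(\text{tr}_n(\NTK)=\sum_i a_ia_i^\top\) and \(\delta h\in\text{span}\{a_i\}\).

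Your explicit Gram factorization makes the range identification immediate, along with the positive semidefiniteness of the reduced views. The paper's kernel argument instead avoids choosing a decomposition. Your caveat that the partial traces must be taken in the same \(\R^k\otimes\R^n\) factorization in which \(\delta h\) is read as a matrix is the right one. The argument holds for any such factorization.
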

}}
\begin{proof}
See Appendix~\ref{subsec:stproofap} for a complete proof, relying primarily on the fact that the \(\NTK\), being a Gram operator, is positive semidefinite. In essence, we show that
\[
u^T \delta h = 0 \in \R^n \quad \forall u \in \ker(\NTK^{temp}).
\]
In turn, this implies that the range of corrections the NTK can produce must lie within the temporal image of \(\NTK^{temp}\). Since column rank and row rank of a matrix, e.g. of a correction \(\delta h \in \R^{k \times n}\), are identical, this yields the rank bound from \(\NTK^{temp}\). Similar reasoning yields the bound from \(\NTK^{space}\). 
\end{proof}

\paragraph{Remarks}
This proposition shows that biased low-rank learning emerges when \textit{either} of the two partially reduced operators has low-rank. A low-rank NTK at a particular iteration of GD implies that learning may be highly biased toward particular targets or low-complexity updates, such as failing to capture high-frequency temporal structure in the target of a recurrent task. In the special case where \(\P\) is explicitly separable across space and time, meaning that propagation can be written as a tensor product of independent spatial and temporal operators, this yields the exact bound
\[
\rank(\delta h) \leq \rank(V V^*) \, .
\]
showing that the spatial and temporal complexity of updates are exactly bottlenecked by the Kronecker core \(V V^*\). Even outside of this case, the spatial and temporal partial views, along with their associated singular values and ranks, can be computed efficiently in our framework (Appendix~\ref{sec:aprnla}). Finally, we remark that high-rank latent representations may still emerge over the course of GD, even when updates are low-rank at every iteration. Only under additional simplifying assumptions that allow the NTK to remain controlled over training (e.g., lazy regimes or linear models) can such a conclusion be made, since it requires tracking the full course of latent formation across all GD iterations (see Future Directions). 



\section{The Kronecker Core Predicts Learning Bottlenecks Across Architectures}


\subsection{GRU Joint Hidden-Input Activity Predicts Dominant NTK Modes}
\label{subsec:grucore}

\begin{figure}[t]
    \centering
    \includegraphics[width=.98\linewidth]{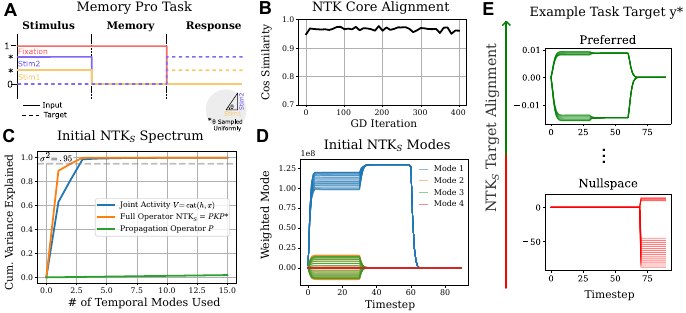}

    \caption{\textbf{Temporal bottlenecking of the global-state NTK by the Kronecker core \(V V^T\).}
\textbf{A} Schematic of the Memory-Pro task, in which the model must reproduce a two-dimensional stimulus after a delay period.
\textbf{B} Cosine similarity between the core, \(\K = VV^* \otimes I_n\), and the NTK, \(\text{cos}(\NTK, V V^T \otimes I_n)\), over GD training of the GRU model on the task in A, showing that the two operators share a similar common basis. Here, we use a model with Xavier initialization, corresponding to Network 1 in Figure~\ref{fig:memproinit} below.
\textbf{C} Cumulative variance explained by the temporal spectrum of the full NTK, the Kronecker core \(V V^T\), and the propagation operator \(\P\). The full NTK has effective rank close to that of \(V V^T\), while \(\P\) remains comparatively high-rank in this regime.
\textbf{D} Dominant temporal modes of the NTK. Each individual mode is a matrix \(\R^{n_x \times n_t}\), describing the time and trial structure of a particular input to the operator (formally defined in Section~\ref{subsec:spacetime}).  Thus, a single mode consists of $n_x=20$ here, as shown by the multiple curves of each color in the plot. Importantly, the NTK modes encode the structure of the input and hidden state, dropping to zero in the response period (timesteps 60-90 here).
\textbf{E} Example one-dimensional task targets in \(\R^{n_x \times n_t \times 1}\), over batches and time. The top is an example of a task that would highly align with the NTK, i.e., its activity is well-explained by the dominant modes in D. By contrast, the second target is badly aligned, since it primarily resides in the range where the NTK modes are zero. Interestingly, the Memory-Pro task has a target similar to the bottom row since it requires a delayed response. Section~\ref{subsec:grubias} explores the consequences of this for learning, showing that this particular GRU is unable to learn the full Memory-Pro task with SGD.}
    \label{fig:skree}
\end{figure}

\paragraph{GRU Global-State Core and NTK}
For the GRU with \({n_h}\) neurons, the core of the NTK is \(V V^T \otimes I_{3n}\), where \(V\) consists of the concatenated hidden-state and input activity,
\[
V = \mathrm{cat}(h, x) \in \R^{n_x \times n_t \times ({n_h} + n_{in})}\, ,
\]
and \(3n\) comes from the three-fold repetition of weight-sites in this model (derivation in Appendix~\ref{subsec:corederivap}). Figure~\ref{fig:skree}A illustrates the Memory-Pro, in which a two-dimensional stimulus is provided to the model during an initial stimulus period, it must retain knowledge of this stimulus during a memory period and finally should respond with the identical stimulus provided during a final, response period. Here, the stimulus has the form \(\cos(\theta), \sin(\theta)\) for uniform angle \(\theta \in [0, 2 \pi]\), so the latent dynamics the model forms on this task often resembles an attractor to a two-dimensional ring~\citep{driscoll2024flexible}. Finally, the model is provided with a fixation input, which is set to \(1\) during the stimulus and memory periods and switches to zero during the response period, indicating that the model should respond. 

\paragraph{Findings} Figure~\ref{fig:skree}B measures the alignment between the \(\NTK\) operator and its core, \(V V^* \otimes I_{3n}\), over multiple iterations of GD on the Memory-Pro task. It shows that, throughout training on this task, the core provides a good predictor of the spectrum of the full NTK, meaning that it captures the dominant modes of the NTK. This point is further reinforced by Figure~\ref{fig:skree}C, which compares the spectrum of \(\NTK\) to those of the operators \(V V^*\) and \(\P\) from which it is composed, at the outset of GD training. Specifically, we measure the dominant \textit{temporal modes} of the operators involved, as defined in Section~\ref{subsec:spacetime}. These modes can be viewed as matrices in \(\R^{n_x \times n_t}\), describing, for each distinct task input, the dominant temporal structures to which the operator responds. Using this perspective, we observe that \(\P\) is effectively full-rank, with singular values that are relatively evenly distributed across the range of outputs it can produce. In contrast, the core \(V V^*\) is low-rank (with only 3 modes needed to capture 95\% of its variance). Intuitively, this suggests that \(\P\) is roughly isotropic and does not strongly bias the direction of corrections. On the other hand, the Kronecker core \(V V^*\) strongly biases corrections, acting highly anisotropically. Finally, we find that \(\NTK\) is itself low-rank, with its cumulative variance curve lying strictly above that of \(V V^*\) and modes well predicted by this Kronecker core matrix. 


Figure~\ref{fig:skree}D shows that the leading temporal modes of the NTK are highly structured, concentrating on a small family of response profiles across the trial. These modes are not generic basis functions; they are induced by the task-dependent activity present at initialization. Consequently, even before training, gradient descent is constrained to act most strongly along a narrow subspace of temporal directions selected by the current recurrent activity.


\begin{figure}[t]
    \centering
    \includegraphics[width=1\linewidth]{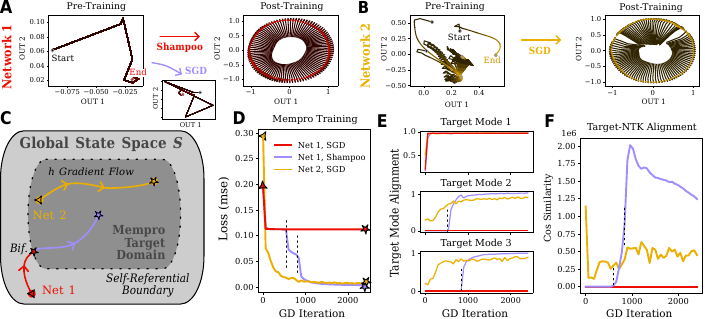}
    \caption{\textbf{Self-referential bias can stall SGD on the Memory-Pro task.}
    \textbf{A-B} Outputs before and after training for two GRU initializations. \textbf{A} (Network~1): a default Xavier initialization with weight scale \(1\), whose hidden dynamics collapse to a single fixed point under zero input during the response period, regardless of the input (``End'' in left panel of Figure). \textbf{B} (Network~2): as an illustrative case, an initialization chosen to produce \(5\) non-trivial fixed points under zero input (Appendix~\ref{subsec:gruexap}), showing that the task-driven dynamics have multiple endpoints at the final evaluation timestep. As in further panels, this initialization leads to better NTK-target alignment initially. Thus, the two networks begin with qualitatively different latent dynamics before training.
    \textbf{C} Schematic of the corresponding gradient flows in global state space \(S\). Network~2 begins in a regime already containing the qualitative structure needed for the task, whereas Network~1 must first create this structure during training.
    \textbf{D} Training losses for the two initializations. Network~2 trains successfully with SGD, while Network~1 plateaus under SGD and only solves the task when trained with the second order Shampoo optimizer~\citep{gupta2018shampoopreconditionedstochastictensor} In the latter case, a transition occurs around iteration \(750\), where new non-trivial hidden-state structure emerges.
    \textbf{E} Alignment of the task target with its leading three temporal modes during training. Network~2 begins with non-trivial alignment in all three modes and learns them steadily with SGD. By contrast, Network~1 under SGD learns only the first mode, while the higher modes remain effectively absent. Training Network~1 with Shampoo eventually produces these missing modes.
    \textbf{F} Cosine alignment of the NTK with target mode \(3\) during training. For Network~1, the alignment is initially nearly zero, consistent with the near-null target modes identified in Figure~\ref{fig:skree}E. Under Shampoo, NTK-target alignment rises before the corresponding hidden-state mode becomes clearly visible in \textbf{E}, consistent with the operator first becoming aligned with the missing target mode and then driving it into the dynamics.}
    \label{fig:memproinit}
\end{figure}

\subsection{Self-Referential Bias Stalls GD When Target Modes are Absent in The Initial NTK}

\label{subsec:grubias}

\paragraph{Self-Referential Bias}
As suggested by the bottom row of Figure~\ref{fig:skree}E, some task targets can have weak alignment with the dominant temporal modes of \(\NTK\) at initialization, meaning that their projection onto the high-eigenvalue eigenspaces is small. In this subsection, we examine this in detail and show that such NTK-target misalignment explains the self-referential bias predicted by Corollary~\ref{corr:adjointalign}: because GD corrections are constrained by the current hidden and input activity, learning is initially biased toward target components already represented in the model dynamics, while poorly aligned components receive only very small updates and therefore learn slowly, producing plateaus.

\paragraph{Two GRU Initializations}
To study this, we contrast two GRUs trained on the Memory-Pro task. Network~1 is the default initialization used in the previous subsection. At initialization, when the input is removed during the response period, its hidden dynamics collapse stably to the trivial fixed point at zero (Figure~\ref{fig:memproinit}A, pre-training). Network~2 is initialized so that, under zero input, the model instead exhibits five non-trivial stable fixed points (dynamics on the task in Figure~\ref{fig:memproinit}B pre-training; construction in Appendix~\ref{subsec:rnnexap}). Thus the two models begin with qualitatively different latent dynamics, and consequently with different weight-site activity \(V = \cat(h,x)\), even before training.

Throughout this subsection, we measure alignment with the target modes obtained from SVD on the target matrix. Specifically, the target has shape \(n_x \cdot n_t \times 3\), due to the three outputs, so its left singular vectors define three orthogonal modes over time and batches, \(U \in \R^{n_x \cdot n_t \times 3}\).

\paragraph{Network~1 Fails to Learn Some Target Modes}
The distinction in initial latent dynamics between the two networks is important because, since the dominant temporal modes of the NTK are already largely determined by the activity-induced core \(V V^T\) when \(\P\) is well-conditioned, as shown in the previous subsection. For the Memory-Pro task, the target lies mainly in the response period, whereas the external stimulus is only present earlier in the trial. Hence, if the hidden state collapses during the response period, then the target modes have almost no overlap with the temporal modes induced by \(V\). This is exactly what happens in Network~1: both the input and hidden activity are nearly absent during the response period. Thus, the target is initially very poorly represented in the Kronecker core, \(V V^T\), hence not well represented in the spectrum of \(\NTK\). On the other hand, Network~2 maintains non-trivial hidden activity during this period, with \(V\) overlapping with all three target modes and substantially. This in turn improves NTK-target alignment from the outset.

Training dynamics in Figure~\ref{fig:memproinit}D-F reflect the distinction. Network~2 learns the task steadily with SGD, while Network~1 quickly reaches a plateau. Figure~\ref{fig:memproinit}E shows the reason. In all cases the first target mode is learned, but for Network~1 under SGD the higher target modes remain essentially absent. In other words, SGD can only exploit the modes already present in the initial \(\NTK\) geometry. This is the self-referential bias of learning in a concrete form: the model first learns what its current dynamics already support.

Figure~\ref{fig:memproinit}F makes this still more explicit and tracks the minimum NTK-target alignment throughout training. For Network~1, NTK-target alignment is effectively zero, consistent with the near-null target modes identified in Figure~\ref{fig:skree}E. As a result, the corresponding corrections produced by SGD are extremely small, and training stalls, plateauing after the first mode is learned. In contrast, Network~2 begins with substantially higher target alignment for all three target modes, and is therefore trainable by SGD alone.

\paragraph{Resolving Poor NTK Alignment with Curvature-Aware Optimization}
To test whether this plateau is really caused by poor NTK-target alignment, rather than by a generic limitation such as insufficient training time, poor learning-rate tuning, or an inaccessible parameter-space solution, we also train Network~1 with Shampoo~\citep{gupta2018shampoopreconditionedstochastictensor}. Shampoo adaptively rescales parameter updates using an approximate second-order preconditioner. In this case, where the hidden state is low-rank and biased towards certain temporal modes, the preconditioner effectively aims to recondition the spectrum of the Gram core matrix \(H H^T\). Thus, directions that are nearly inactive under the original GD kernel can receive larger effective updates. In this case, the optimizer is able to amplify nearly-null directions enough to escape the plateau. Around iteration \(750\), a new non-trivial mode emerges in the hidden dynamics, after which the remaining target modes become learnable (Figure~\ref{fig:memproinit}D-F). Notably, the NTK-target alignment begins to rise before this new activity is clearly visible in the hidden dynamics, suggesting that the operator first becomes aligned with the missing target mode and then drives it into the latent state.

This experiment illustrates one concrete consequence of self-referential bias. When the initial hidden dynamics already contain activity aligned with the task, the corresponding core \(V V^T\) and NTK are well conditioned for learning, and SGD succeeds. When these modes are absent, the relevant target directions are effectively null under the NTK, producing a long plateau or complete failure of iterative learning under SGD. Thus, there is a \textbf{trade-off} between structured low-rank dynamics making models stable and interpretable \citep{farrell2023from, mastrogiuseppe2018linking, ostojic2024computational}, but also biasing learning toward those same directions, restricting the range of tasks that SGD can learn efficiently from a given initialization. Intriguingly, this effect may explain the fact that fixed points in model activity are reused across similar tasks ~\cite{driscoll2024flexible}, since \(\NTK\) is implicitly constrained by the dynamics only allowing for a highly reduced range of corrections that it can produce.

\subsection{Low-Rank Space-Time Learning Regimes Based on Rank of P and K}

\label{subsec:regimes}
We further examine the low-rank structure of \(\NTK\), to isolate distinct regimes in which gradient descent is bottlenecked in its rank over space, time or both. Specifically, we compute the effective rank of the partially reduced views, \(\NTK^{temp} \in \R^{n_x \cdot n_t \times n_x \cdot n_t}\) and \(\NTK^{space} \in \R^{{n_h} \times {n_h}}\), defined in Section~\ref{subsec:spacetime}, for a vanilla RNN with global state space \(S = \R^{n_x \times n_t \times {n_h}}\), over a batch of \(n_x\) inputs, simulated for \(n_t\) timesteps, with \({n_h}\) hidden units.

\begin{figure}[t]
    \centering
    \includegraphics[width=0.95\linewidth]{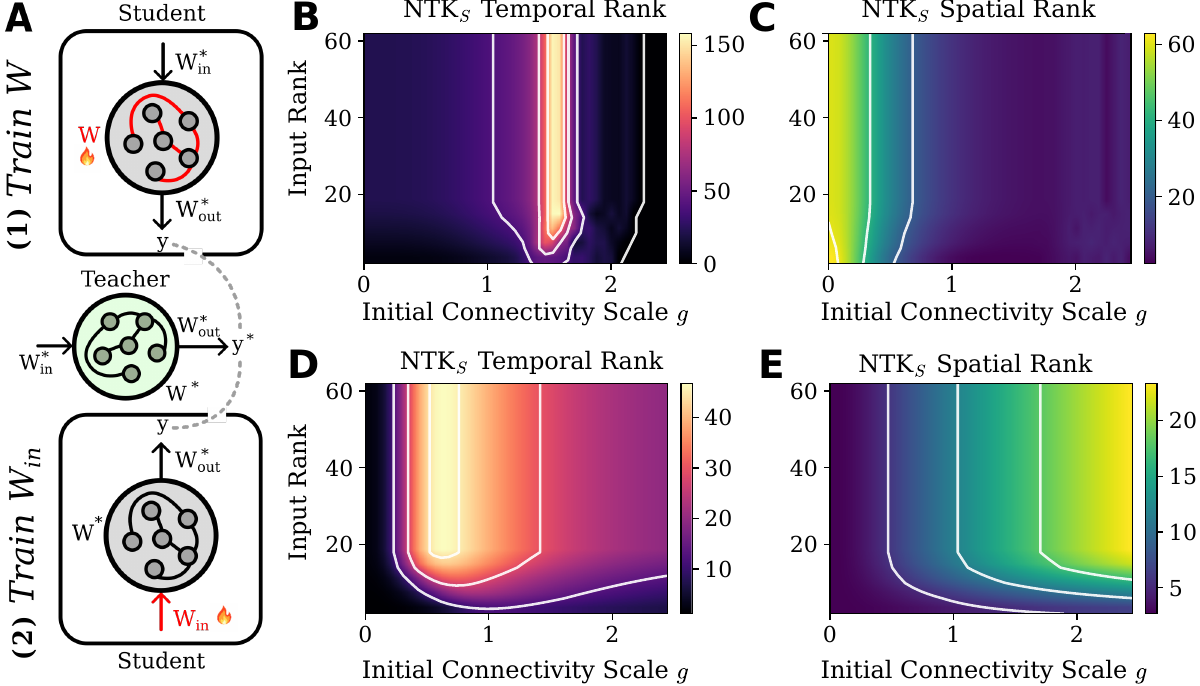}
    
    \caption{\textbf{Recurrent gain and input rank induce distinct spatial and temporal NTK bottlenecks.}
    \textbf{A} Two variants of a student-teacher task. For both, we begin with a vanilla RNN teacher with fixed weights \(W^*, W_{in}^*, W_{out}^*\), all drawn from Xavier normal initialization. In the top task, we freeze the student to have identical weights other than \(W\), which is initialized randomly with gain \(g\) and trained with GD. The training objective is that the outputs of the two models on random normal inputs match. In the bottom task, we instead fix \(W = W^*\) and randomly initialize \(W_{in}\) with gain \(g\), with the same objective. Thus, these tasks isolate the role of learning recurrent versus input weights in GD, with the NTK core defined by hidden or input state, respectively (see example in Section~\ref{subsec:core_rnn_example}). The inputs to each model are random normal (see Appendix~\ref{subsec:rnnexap}), with an input channel count varied in B--F. 
    \textbf{B} Temporal rank when training the recurrent weights. This rank increases with gain \(g\) before collapsing again in the unstable regime, consistent with the core prediction that richer recurrent activity increases the temporal rank available to the NTK.
    \textbf{C} Spatial rank when training the recurrent weights. This rank, determined primarily by the propagation operator \(\P\), collapses as \(g\) increases, corresponding to \(\P\) concentrating onto a small number of unstable or exploding modes.
    \textbf{D-E} Temporal and spatial rank when training the input weights. Here, the temporal rank grows with input rank, consistent with the bottlenecking effect predicted by Theorem~\ref{thm:core}, while the spatial rank varies comparatively little.
}
    \label{fig:regions}
\end{figure}

The spectrum of \(\NTK^{space}\) describes the global spatial bias of the operator. If it is concentrated in a few directions, then regardless of the error signal, updates are confined to a low-dimensional subspace of \(\R^{n_h}\). In contrast, the spectrum of \(\NTK^{temp}\) describes the range of temporal modes available for solving tasks with a given set of inputs. When \(\NTK^{temp}\) is low-rank, updates are restricted to a small family of temporal signals, causing bottlenecking of spatial corrections induced by a given error (Proposition~\ref{prop:spacetime}). Thus, the spatial effective rank captures biases introduced by the propagation geometry of the architecture, while the temporal effective rank captures biases introduced by the activity and input patterns available during learning.

In particular, Figure~\ref{fig:regions} studies these reduced operators in a student-teacher RNN setting, summarized in panel A. We consider two complementary tasks. (1) We set all student weights identically to the teacher other than the recurrent weights, which must be trained with GD so the student and teacher outputs match. (2) Instead, the task requires training of only the input weights, with all other student weights identical to the teacher weights and frozen during training. In both tasks, we investigate initializing the dynamic weights with a random normal distribution and gain \(g\).  
In addition to sweeping the gain term, we also sweep the effective rank of the task inputs. These two sweeps are chosen to probe the two factors in the decomposition, \(\NTK = \P (\cat(H,X)\cat(H,X)^T \otimes I_n)\P^*\), demonstrating that the core \(\cat(H,X)\cat(H,X)^T\) primarily bottlenecks temporal structure, while the propagation operator \(\P\) reshapes the spatial structure.

\paragraph{Temporal Bottlenecks from the Core}
Figure~\ref{fig:regions}B shows that when training the recurrent weights, the temporal rank of the NTK increases as the recurrent gain \(g\) moves away from a strongly contractive regime, consistent with richer hidden activity producing a higher-rank core \(V V^T\) and hence a more expressive temporal NTK~\citep{sompolinsky1988chaos, rajan2006eigenvalue,engelken2020lyapunovspectrachaoticrecurrent}. However, this increase is confined to an intermediate regime: for larger \(g\), the temporal rank drops sharply, indicating that the temporal richness induced by the core is ultimately lost once propagation becomes sufficiently ill-conditioned. Thus, increasing gain can initially enrich the temporal geometry available to learning, but beyond a critical regime instability produces a new low-rank bottleneck.

The same mechanism appears when training the input weights. Figure~\ref{fig:regions}D shows that the temporal rank grows strongly with input rank, as predicted by the core decomposition, since the temporal bottleneck is controlled primarily by the rank of the weight-site activity \(V = \cat(H,X)\). The dependence on recurrent gain is present but secondary compared with the effect of input rank.

\paragraph{Spatial Bottlenecks from Propagation}
The spatial rank behaves differently. Figure~\ref{fig:regions}C shows that when training the recurrent weights, the spatial rank collapses as \(g\) increases. This collapse is not explained by the Kronecker core, which is separable and isotropic over \(\R^{n_h}\). Rather, it is governed primarily by the operator \(\P\), which becomes increasingly ill-conditioned as the dynamics approach instability. In this regime, propagation concentrates onto a small number of expanding directions, so that the NTK becomes effectively low-rank in space even when the temporal core remains expressive.

By contrast, Figure~\ref{fig:regions}E shows that the spatial rank varies more gradually, increasing somewhat with input rank but without the sharp transitions seen in the temporal rank. This reinforces the distinction above: input complexity primarily controls the temporal bottleneck through \(V V^T\), while recurrent gain primarily controls the stronger spatial bottleneck through \(\P\).

Taken together, these sweeps reveal two distinct bottlenecks in the empirical NTK. The weight-site core \(V V^T\) primarily limits the temporal diversity of available corrections, while the propagation operator \(\P\) primarily limits how broadly those corrections are distributed across state space. From this perspective, increasing recurrent gain can enrich temporal expressivity, but only until propagation becomes sufficiently ill-conditioned that updates concentrate onto a small number of directions~\citep{sompolinsky1988chaos, rajan2006eigenvalue,engelken2020lyapunovspectrachaoticrecurrent}. Favorable regimes for learning a diverse set of tasks therefore require a core rich enough to represent task-relevant corrections and a propagator \(\P\) that remains sufficiently well-conditioned to distribute them broadly.  In a broad sense, this echoes the influential findings of \citep{sussillo2009generating}, regarding optimal learning regarding at the ``edge of chaos.''

\subsection{The NTK Core Bottleneck Appears in a Self-Attention Transformer}

\label{subsec:transformer}

\begin{figure}[t]
    \centering
    \includegraphics[width=\linewidth]{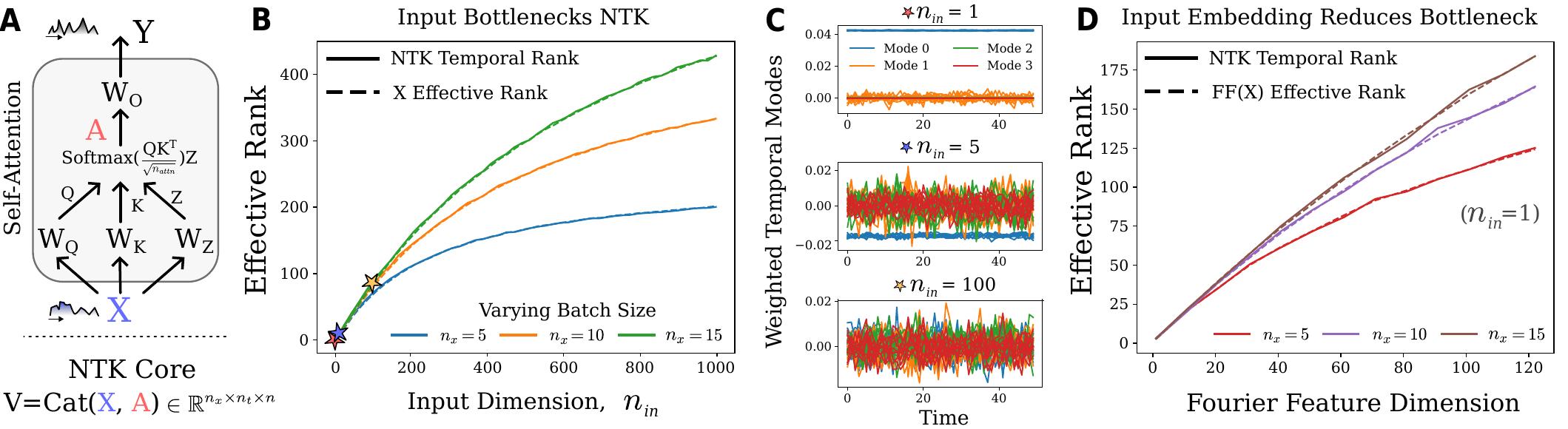}

    \caption{\textbf{Rank bottlenecking by input dimension in a self-attention model.}
    \textbf{A} Self-attention architecture with time-varying inputs \(X \in \R^{n_x \times n_t \times n_{in}}\). As in the main text, the weight-site core of the NTK consists of the concatenated input activity and attention matrix, \(V = \cat(X,A)\). See Appendix~\ref{subsec:transexap} for model and input details.
    \textbf{B} Because \(A\) lies in the same temporal span as \(X\), the temporal rank of the NTK is bottlenecked by the batch size and input dimension. Increasing the input dimension increases the NTK temporal rank.
    \textbf{C} Three examples with distinct input dimension and fixed batch size \(n_x = 10\), showing the variance-weighted dominant temporal modes of the NTK, \(V_j \in \R^{n_x \times n_t}\). For one-dimensional inputs, there is only a single substantially non-zero mode, which is approximately constant over time. By contrast, for \(n_{in}=100\), there are many non-zero modes with a much flatter spectrum. Each mode is plotted over all \(10\) batch inputs, with color indicating mode identity. Here \(X \sim \mathcal{N}(0,1)\) is drawn randomly over time, batch, and input dimension, but the rank bottleneck applies for arbitrary task inputs.
    \textbf{D} Applying an input embedding (deterministic Fourier features in this case) maps a one-dimensional input signal into a higher-dimensional feature space. Increasing the number of features increases the NTK temporal rank (all of the curves correspond to $n_{in}=1$, at the extreme left of the curves in panel $B$), reducing the bottleneck and yielding a more expressive range of GD updates.}
    \label{fig:transformer}
\end{figure}

Our analysis is not confined to recurrent models, although they are the primary focus of this work. Indeed, Theorem~\ref{thm:core} relates to any model that can be written as a system of constraints relating the state variables, inputs and weight-based parameters. This encompasses a wide range of models, including any explicit, weight-based model. To illustrate this structural theorem in a distinct setting, we apply the same perspective to a nonlinear self-attention block of a transformer. We use this example to show that the same Kronecker-core mechanism appears in a non-recurrent model as well: low-dimensional input structure can bottleneck the \emph{temporal} rank of the NTK, while widening the attention block primarily affects its spatial and overall rank. We note that the same derivation extends to a full transformer block with attention and MLP layers; for simplicity, we focus here on the attention component alone, with the full derivation deferred to the Appendix~\ref{subsec:corederivap}.

Consider a self-attention block applied to a signal with \(n_t\) timesteps, so the input has the form
\[
X \in \R^{n_x \times n_t \times n_{in}},
\]
where \(n_{in}\) is the input dimension and \(n_x\) is the batch size. The attention block first computes
\[
Q = X W_Q^\top, \qquad
K = X W_K^\top, \qquad
Z = X W_Z^\top,
\]
where we use \(Z\) for the value channel to avoid confusion with our weight-site notation \(V\). The attention output is then
\[
A = \operatorname{softmax}\!\left(\frac{QK^\top}{\sqrt{n_{attn}}}\right) Z,
\qquad
Y_{\mathrm{attn}} = A W_O^\top,
\]
where \(Q,K,Z,A \in \R^{n_x \times n_t \times n_{attn}}\).

\paragraph{Input Dimension Produces a Temporal Bottleneck}
The trainable weights are \(W_Q, W_K, W_Z,\) and \(W_O\), so the corresponding weight sites are
\[
V = \cat(X,X,X,A).
\]
Thus, by Theorem~\ref{thm:core}, the \(\NTK\) core again has the form \(V V^T \otimes I_n\), which can be viewed as a matrix of shape \(n_x \cdot n_t\) by \(3 n_x + n_{attn}\). Naturally, this fits Theorem~\ref{thm:core}, with \(k = n_x \cdot n_t\) constituting the \textit{temporal dimension} of \(\NTK\), with partially reduced operator \(\NTK^{temp}\) indicating dominant modes of this operator as signals over batches and input timesteps, and \(n = 3 n_x + n_{attn}\) the \textit{spatial dimension} of the operator. In particular, since \(Q\), \(K\), and \(Z\) are all generated linearly from \(X\), and \(A\) is formed by reweighting temporal content derived from these quantities, the dominant temporal modes available to \(\NTK\) are strongly constrained by the temporal span already present in \(X\).

This has the consequence that if the task input is low-dimensional, then the temporal rank of the NTK is necessarily low as well. Indeed, Figure~\ref{fig:transformer}B shows that as the input dimension increases, the temporal rank of the NTK increases with it. Figure~\ref{fig:transformer}C gives a more concrete view of this effect. For one-dimensional time-varying inputs, the NTK has essentially a single dominant temporal mode, which is nearly constant across time. For higher-dimensional inputs, many more temporal modes are available, and the spectrum becomes less concentrated. Additional sweeps in Appendix~\ref{subsec:aptrans} further clarify this distinction: while the temporal rank depends primarily on the input representation, the spatial rank and full operator rank grow strongly with attention width (Appendix Figure~\ref{fig:twodaptrans}).

From the operator viewpoint, this means that gradient descent is not free to produce arbitrary temporal corrections in the attention model. Rather, it is constrained to update along the dominant temporal modes already present in the weight-site activity. If the input consists only of a narrow family of temporal signals, then the NTK is biased toward that same family. In particular, low-dimensional inputs induce a low-rank temporal bottleneck, so the resulting updates are confined to a restricted set of temporal patterns. This is a similar self-referential mechanism seen in the recurrent case, now arising from the representation bottleneck of the attention block rather than from recurrent dynamics. At the same time, widening the attention block can still enlarge the spatial and overall rank of the operator, so the bottleneck here is specifically temporal rather than a claim that the full transformer NTK is globally low-rank.

\paragraph{Input Embeddings Relieve the Bottleneck}
In Figure~\ref{fig:transformer}D we show that the bottleneck can be relieved on a task of fixed input dimension by enriching the input representation before attention is applied. In our experiments, we use deterministic Fourier features to map a one-dimensional input into a higher-dimensional feature space. As the number of features increases, the temporal rank of the NTK increases as well, yielding a broader and more expressive range of GD updates. This provides a mechanistic explanation, within our Kronecker-core framework, for why richer input embeddings or positional features can substantially improve learnability in attention-based models. It is consistent with and effectively extends prior work, which used the infinite-width NTK for MLP models to show that Fourier features produce a better conditioned NTK for learning in this context, avoiding the inductive bias of GD toward learning low-frequency features of the target~\citep{tancik2020fourierfeaturesletnetworks}. By contrast, adding multiple attention heads in this toy setting does not substantially change the same temporal bottleneck, consistent with the fact that the heads still derive their query, key, and value channels from the same underlying input representation (Appendix Figure~\ref{fig:apmultihead}).

Taken together, these results show that the Kronecker-core perspective extends beyond recurrence and sheds light on one reason richer input encodings can improve conditioning in transformer-like models. In a self-attention block, the temporal rank of \(\NTK\) is bottlenecked by the structure of the input representation appearing in the weight sites, and gradient descent is correspondingly biased toward the dominant temporal modes of that representation. Enriching the input features lifts this temporal bottleneck and produces a more expressive temporal NTK, while widening the attention block primarily enriches the spatial and overall operator structure. Thus, even in the non-recurrent setting of transformer attention blocks, the core decomposition provides a concrete and interpretable explanation for low-rank, implicitly biased learning that can be overcome through positional embedding. 

\section{Discussion}

We introduced a finite-width framework for analyzing gradient descent directly in the full model state space through the empirical NTK acting on the state tensor, \(\NTK\). From an implicit reformulation of the model dynamics. We show that
\[
\NTK = \P \K \P^*,
\]
The decomposition separates propagation through the model dynamics (\(\P\)) from immediate parameter-to-state interactions (\(\K\)). For a broad class of weight-based models, we then show that \(\K\) has an exact Kronecker-core structure determined by forward-computable weight-site activity. This gives a tractable and interpretable handle on the state-space geometry of GD in the typical setting where \(\NTK\) is otherwise too large to study directly. In particular, it makes clear why gradient descent is often biased toward low-rank, mode-selective solutions, a phenomenon we identify as self-referential bias. In this phenomenon, the model learns most easily along directions already present in its current hidden-input activity.

This perspective is useful both theoretically and practically. Theoretically, it gives a concrete language for describing how model structure, dynamics, and representation geometry constrain learning. Within the implicit formalism, it is natural to compute \(\NTK\) for larger composed models by expanding the state or parameter spaces. One can also incorporate simplifying assumptions into the model by lifting the global constraint into a larger system of constraints that make the state or parameter structure more explicitly define. Then, computing \(\P\) and \(\K\) and projecting the NTK back to the original global state space yields an NTK reduced according to the assumption. This procedure is exactly used in the proof of Theorem~\ref{thm:core}. Finally, the framework makes the form of \(\P\), as the inverse of a one-step generator, and \(\K\), as a weight-site Gram matrix, explicit, allowing one to reason directly about how modifying dynamics or parameterization changes the bias of GD.

Practically, the approach provides a tool for diagnosis of bottlenecks. It can determine which tasks are accessible from a given initialization, when learning will be bottlenecked in space or time, and why some target components are learned quickly while others stall. More broadly, the framework provides a way to study internal representation learning in finite-width networks without focusing the analysis on system outputs or needing to take infinite-width limits.  The accompanying matrix-free toolkit, \href{https://github.com/meeree/kpflow}{\texttt{kpflow}}, makes these questions computationally accessible in real models.

A natural next step is to push this local theory further across training, to understand how these iteration-wise bottlenecks accumulate into global representation formation and persistent low-rank structure. This likely will require a stronger theory of the resolvent \(\P\), perhaps by studying the inverse of the NTK, which avoids explicitly constructing \(\P\). Under additional simplifying assumptions, the gradient flow and reachable domain of GD solutions may become tractable. For example, by truncating the Neumann series for \(\P\) in Corollary~\ref{corr:recntk} to a few terms in regimes of weak recurrence \citep{Trousdale2012ImpactSpikeTimeCorrelations, Pernice2011HowStructureDeterminesCorrelations}.

Another important direction is to study how assumptions beyond rank constraints the structure of \(\NTK\). For example, if the network dynamics collapse to fixed points or pass through dynamical bifurcations, understanding the downstream effect on \(\NTK\) could shed light on attractor formation hypotheses \citep{driscoll2024flexible, farrell2023from, qian2025alternative}. These in turn are at the core of influential ideas of computation through dynamics and dynamical motifs \citep{Vyas2020ComputationThroughNeuralPopulationDynamics}. It would be also useful to determine how structured perturbations or specialized synaptic plasticity rules, such as three-factor rules or other truncations of backpropagation, modify \(\NTK\) \citep{Lillicrap2016RandomFeedbackAlignment, Menick2020SparseApproximationRTRL}. Another interesting direction is to understand the impact of learning under \(\P\P^*\) without the operator \(\K\), matching a parameter-free, functional gradient update, or through modified forms of \(\K\) that would follow from freezing a subset of the parameters or adding synaptic plasticity. In total, our framework opens a route toward studying how stability, latent structure, bifurcations, architectural constraints, parametrization, and the dimension and geometry of inputs geometry shape learnability in a wide range of practically used models.

\section{References}

\bibliographystyle{plainnat}
\bibliography{refs}

\newpage
\appendix

\section{Appendix}

\subsection{The \href{https://github.com/meeree/kpflow}{\texttt{kpflow}} Package for Analysis of the Empirical Global NTK}
\label{sec:aprnla}

Here we detail the backend implementation of the operators used throughout the paper. Some details, including timing and implementation aspects of the randomized trace estimation routines described below, were investigated before in prior work~\citep{hazelden2025fastneuraltangentkernel}.

\subsubsection{High-Level Structure}

\paragraph{Matrix-free interface.}
Every operator in the paper is implemented as a derived class of an \texttt{Operator} base class. Importantly, all operators are implemented in a matrix-free way. Concretely, an \texttt{Operator} exposes only two methods implemented by the derived class: \texttt{matvec} and \texttt{rmatvec}. For an operator \(\mathcal{G}: S_1 \to S_2\), these correspond to application of \(\mathcal{G}\) and \(\mathcal{G}^*\), respectively, to candidate vectors in \(S_1\) or \(S_2\). Thus all analysis of the operators is carried out by probing them with input vectors \(v \in S\), rather than explicitly forming their full matrices.

Fortunately, there is a substantial literature on matrix-free numerical linear algebra for exactly this setting. Many such methods are randomized, estimating operator-level quantities by sketching or probing the action of the matrix on random vectors. Using these ideas, the base \texttt{Operator} class supports a range of analysis routines, including estimation of the top singular modes, alignment, effective rank, and operator norms. In addition, we expose an interface that maps closely to the theory, allowing one to compose operators, compute tensor products, compare against exact matrix representations when tractable, and form reduced partial-trace views, as described below.

\paragraph{Estimating the trace through randomized algorithms.}
One of the basic quantities exposed for every operator is its trace, i.e.\ the sum of its eigenvalues (equivalently, diagonal entries when the operator is represented as a matrix). To estimate this efficiently, we use Hutch++, which can provide accurate trace estimates with relatively few probes, especially when the operator is approximately low-rank \citep{hazelden2025fastneuraltangentkernel,meyer2021hutchoptimalstochastictrace}. If the operator is not low-rank, performance degrades, but, as discussed in the main paper, the NTK is often strongly low-rank in the settings studied here, so Hutch++ is well suited to this use case.

\paragraph{Trace-derived quantities.}
The trace is also used as a basic building block for other analysis metrics. In particular, it yields approximations to an operator's effective rank, Frobenius norm, and cosine similarity with another operator on the same domain.

\paragraph{Randomized SVD.}
One of the most informative descriptors of an operator is its leading singular structure. Matrix-free methods for estimating the top singular components are well developed~\citep{9550979}. In our implementation, we rely on \texttt{scipy}'s \texttt{LinearOperator} interface together with sparse eigensolvers such as \texttt{eigsh} for symmetric operators~\citep{2020scipy,lehoucq1998arpack}.

\subsubsection{Tensor Operator Interface}

\paragraph{Composing operators.}
Two operators in \texttt{kpflow} can be combined in multiple ways. Given operators \(\mathcal{G}_1: S_1 \to S_2\) and \(\mathcal{G}_2: S_2 \to S_3\), their composition \(\mathcal{G}_2 \circ \mathcal{G}_1: S_1 \to S_3\) corresponds to the product of their explicit matrices. In our package, the syntax to compose two operators \texttt{op1} and \texttt{op2} is \texttt{op1 @ op2}. Another way to combine operators is by the tensor product \(\mathcal{G}_1 \otimes \mathcal{G}_2\), for arbitrary \(\mathcal{G}_1: S_1 \to S_2\) and \(\mathcal{G}_2: S_3 \to S_4\). The resulting operator \(\mathcal{G}_1 \otimes \mathcal{G}_2: S_1 \otimes S_3 \to S_2 \otimes S_4\) acts on tensor-product inputs. In code, this is exposed by the syntax \texttt{op1.tprod(op2)}.

\paragraph{Example syntax.}
Basic operators such as the identity and wrappers for dense numpy matrices are also implemented. For example, if \texttt{kop} denotes the operator \(\mathcal{K}\) from Proposition~\ref{prop:ntk}, and \(V\) is the \(k \times m\) weight-site matrix from Theorem~\ref{thm:core}, then the theorem can be checked numerically by verifying that the relative operator error is small:
\begin{center}
\begin{align}
    \label{eqn:verifythm}
    \texttt{err = kop.compare(MatrixWrapper(V @ V.T).tprod\_like(IdentityOperator(n), kop))}
\end{align}
\end{center}
Here \texttt{compare} computes a relative error between two operators. The wrapper \texttt{MatrixWrapper} converts the dense matrix \(V V^\top\) into an operator. The function \texttt{tprod\_like} forms the tensor product with the supplied operator and reshapes the result to match the domain and codomain structure of the reference operator, here \texttt{kop}. Finally, \texttt{IdentityOperator(n)} constructs the \(n \times n\) identity operator. In total, this uses randomized Frobenius norm estimation under the hood to verify numerically that
\[
\mathcal{K} \approx V V^\top \otimes I_n.
\]

\subsubsection{Partial Trace Views}

\begin{figure}[t]
    \centering
    \includegraphics[width=.99\linewidth]{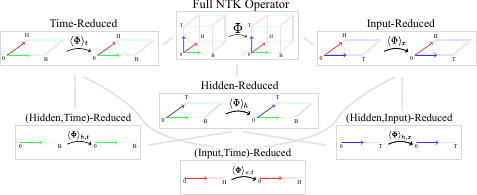}
    \caption{\textbf{Examples of partial reductions of an operator acting on a 3-tensor domain, \(\R^{B \times T \times H}\).} The operator \(\Phi\) acts on a domain \(\R^{B \times T \times H}\), representing batch, time, and hidden-unit axes. Each reduction averages over one or more axes. The reduced operators at the bottom are simple \(B \times B\), \(T \times T\), and \(H \times H\) matrices, respectively. These capture how the operator varies across batch inputs, hidden units, or timesteps after averaging over the complementary axes.}
    \label{fig:reduce}
\end{figure}

\paragraph{The partial trace.}
The operators discussed in the paper act naturally on tensor domains. It is common to vectorize such domains and ignore this structure, but keeping the tensor structure explicit yields more informative reduced views. For example, the recurrent NTK operator from Corollary~\ref{corr:recntk} acts on 3-tensors in \(\R^{n_x \times n_t \times n}\). Equivalently, the operator itself can be viewed as a 6-tensor. A useful tool for analyzing such an object is the \emph{partial trace}, which traces over one or more tensor axes.

As in the main text, we define reduced operators such as \(\texttt{ntk}_{\mathrm{temp}}\), which averages over the hidden dimension \(n\) and yields an operator acting on the \(n_x \times n_t\) axes, and \(\texttt{ntk}_{\mathrm{space}}\), which averages over the batch and time axes and yields an \(n \times n\) matrix. The effective ranks of these reduced operators are used in Figure~\ref{fig:regions} of the main text.

In code, the partial trace is implemented, for example, by
\[
\texttt{ntk\_temp = ntk.partial\_avg(2)},
\]
which averages over the final axis. Under the hood, this is done in a matrix-free way: (1) inputs to the reduced operator are expanded across the traced axis, (2) passed through the original operator, and (3) the outputs are averaged over that same axis. Concretely, for \(x \in \R^{n_x \cdot n_t}\),
\[
\NTK^{\mathrm{temp}}(x) = \frac{1}{n}\sum_{j=1}^n \NTK(x e_j^\top),
\]
where the matrix \(x e_j^\top\) corresponds to expanding \(x\) into the missing hidden dimension.

These reduced operators can then be combined with SVD routines to compute reduced singular modes, for example dominant time-by-trial modes after averaging over hidden units, as in Figure~\ref{fig:skree} in the main text. For an operator acting on \(\R^{B \times T \times H}\), there are \(2^3-1=7\) nontrivial partial reductions, giving a family of complementary views of how the operator acts across different axes.

\paragraph{Forming explicit reduced matrices.}
Finally, once an operator has been reduced enough, it can become tractable to form explicitly. For example, \(\texttt{ntk}_{\mathrm{space}}\) is an \(n \times n\) matrix, where \(n\) is the hidden dimension, and so can often be constructed exactly. To do this, we provide a method \texttt{op.full\_matrix()} that forms the corresponding dense matrix. For instance, the batch-by-batch reduced NTK can be computed via
\begin{center}
    \texttt{ntk\_batches = ntk.partial\_avg((1,2)).full\_matrix()}
\end{center}
which averages over time and hidden dimensions before explicitly forming the reduced operator. Thus, although treating the NTK as a tensor operator is conceptually more involved, it naturally yields multiple informative reductions tailored to different questions.

\subsection{Experimental Appendix}
\label{subsec:expap}

Here we give additional details for the experiments in the main text.

\subsubsection{GRU Memory Pro}
\label{subsec:gruexap}

\paragraph{Constructing the 5-fixed-point initialization (Network 2).}
We use \(n=256\) neurons in these experiments. The goal is to construct a recurrent initialization with multiple nontrivial fixed points at initialization, and then use the same idea to initialize the GRU recurrent weights.

\begin{figure}[t]
    \centering
    \includegraphics[width=\linewidth]{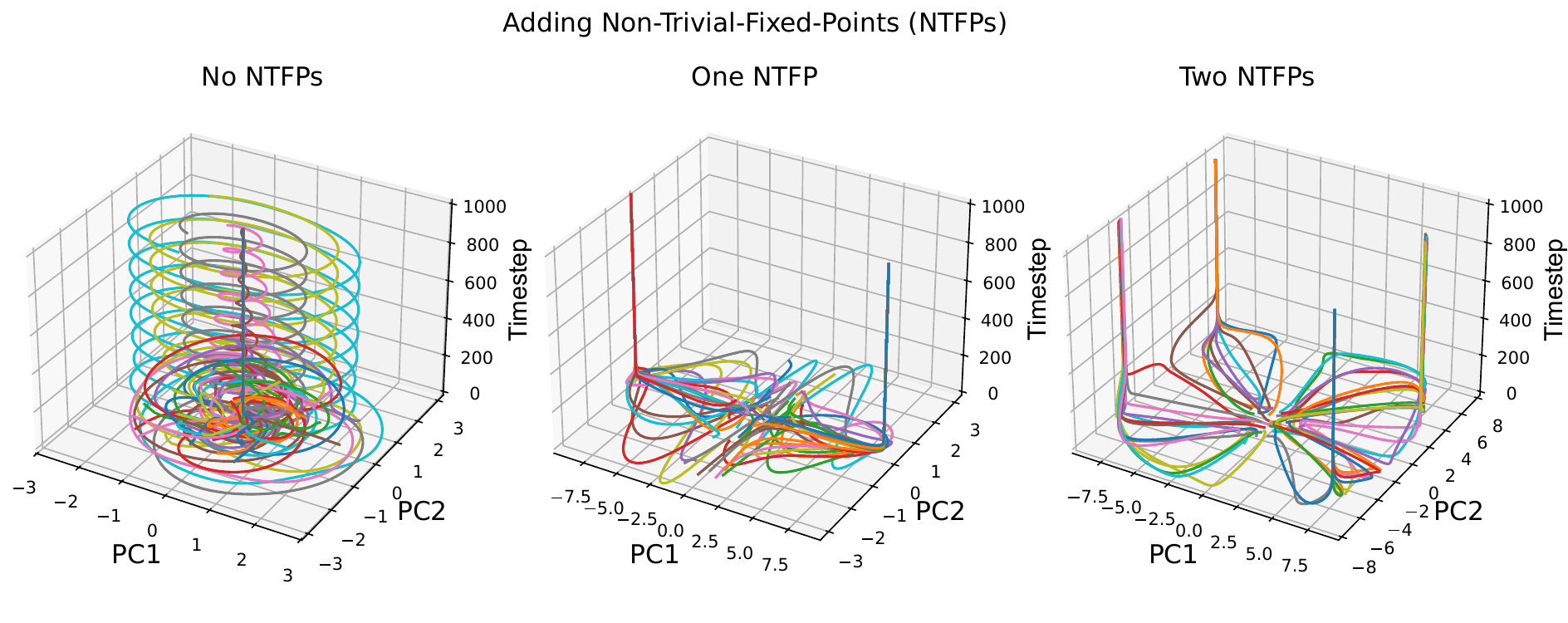}
    \caption{\textbf{Dynamics of an RNN with added non-trivial fixed points (NTFPs).} Random weights are sampled with Xavier initialization, scaled by gain \(g\), with \(n=256\) and \(g\) varied between \(1\) and \(2\). Non-trivial fixed points are then added to the model as described in the text. Plotted trajectories correspond to distinct random initial conditions and gain values. The random seed is fixed between trials and the dynamics are projected into a shared PCA space. Time is shown on the vertical axis. In the absence of added NTFPs, the RNN transitions between collapse and chaotic behavior. For small gain, trajectories collapse to the trivial fixed point at zero; for larger gain, the dynamics become unstable. With one added NTFP, the system becomes effectively bistable under all gains shown. With two added NTFPs, the dynamics collapse to four distinct fixed points. The number of effective fixed points doubles because \(\tanh\) is odd.}
    \label{fig:addingntfp}
\end{figure}

Specifically, consider an RNN in the response period of the Memory-Pro task, where the task input is zero. We want to construct an RNN with nontrivial fixed points, rather than having all task-driven trajectories collapse to the trivial fixed point, as happens under a default Xavier initialization. Consider the update
\[
h_{t+1} = W \sigma(h_t).
\]
Then
\[
h_{t+1} - h_t = W \sigma(h_t) - h_t.
\]
Setting this to zero at a desired fixed point \(\bar h\) gives
\[
W \sigma(\bar h) - \bar h = 0.
\]
This is achieved by any \(W\) of the form
\[
W = \frac{\bar h_1 \sigma(\bar h_1)^\top}{\|\sigma(\bar h_1)\|^2} + W_1,
\]
where \(W_1\) is any matrix that maps \(\sigma(\bar h_1)\) to zero.

More generally, if \(\sigma(\bar h_1),\ldots,\sigma(\bar h_m)\) are orthogonal, then we can construct a matrix \(W\) with all of these points fixed:
\begin{align}
    W = \sum_{i=1}^m \frac{\bar h_i \sigma(\bar h_i)^\top}{\|\sigma(\bar h_i)\|^2} + W_{1:m},
\end{align}
where \(W_{1:m}\) annihilates each \(\sigma(\bar h_i)\). Indeed,
\begin{align}
    W \sigma(\bar h_j)
    = \sum_{i=1}^m \frac{\bar h_i \, \delta_{ij}\,\|\sigma(\bar h_i)\|^2}{\|\sigma(\bar h_i)\|^2} + 0
    = \bar h_j.
\end{align}
A natural choice is to take the \(\bar h_i\) to be scaled orthogonal coordinate directions, since \(\sigma(k e_j)=\sigma(k)e_j\) for coordinate vectors when \(\sigma\) acts componentwise. In that case the orthogonality of the transformed vectors is immediate.

\paragraph{Task details.}
We use fixed-duration stimulus, memory, and response periods so that the hidden-state and output dynamics are easier to interpret. Each trial has \(n_t=90\) timesteps, with \(30\) timesteps per task phase. We use batch size \(500\), with \(5{,}000\) total task inputs and targets. During training, Gaussian noise with variance \(3.2\) is added to the inputs to encourage more robust attractor structure; this noise is turned off during evaluation and plotting \citep{driscoll2024flexible,qian2025alternative}. The learning rate is \(10^{-3}\).

\subsubsection{Vanilla RNN Space-Time NTK Rank}
\label{subsec:rnnexap}

\paragraph{Task setup.}
For the spatial and temporal regime experiments in Section~5.3, we use a nonlinear student--teacher RNN setting. The teacher and student share the same architecture and are both driven by i.i.d.\ standard normal input sequences. The teacher is initialized with a fixed recurrent gain \(g^*\), which is set to \(g^*=1\) in all experiments. The student uses the same architecture, but one subset of weights is re-initialized.

In the recurrent-weight task, the student's input and output weights are set equal to the teacher's, while the recurrent weights \(W\) are re-sampled with Xavier initialization and gain \(g\). The goal is then to recover the teacher output by training only \(W\). In the input-weight task, the student's recurrent and output weights are instead matched to the teacher, while the input weights \(W_{\mathrm{in}}\) are re-sampled with Xavier initialization and gain \(g\), and only these weights are trained. In both cases the task is realizable by returning the perturbed weights to the teacher values, so slow or incomplete learning reflects bias in the induced NTK rather than task misspecification.

Unless otherwise stated, we use \(n_{\mathrm{in}}=16\) input channels, \(n=64\) hidden units, \(n_{\mathrm{out}}=1\) output channel, and \(n_t=40\) timesteps, with an MSE loss applied at every timestep. Models are trained with vanilla SGD using batch size \(128\), learning rate \(0.01\), and no momentum or gradient clipping.

\subsubsection{Transformer Global-State NTK}
\label{subsec:transexap}

\begin{figure}[t]
    \centering
    \includegraphics[width=.99\linewidth]{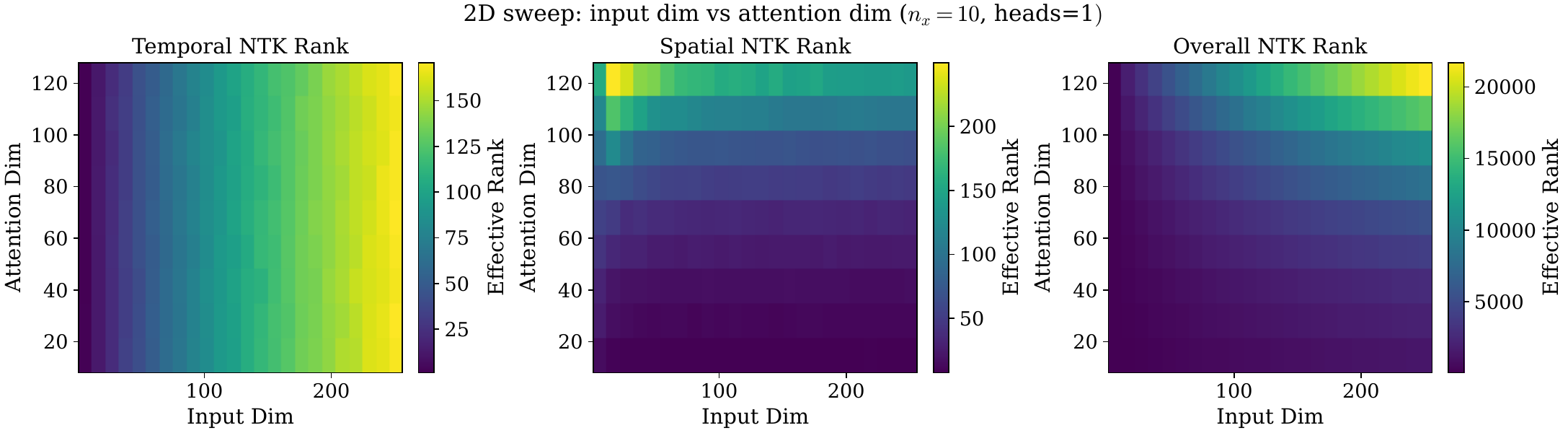}
    \caption{\textbf{Temporal, spatial, and overall NTK effective rank for a two-dimensional sweep over input dimension and attention dimension.} The temporal rank grows primarily with input dimension, while the spatial and overall rank increase strongly with attention width.}
    \label{fig:twodaptrans}
\end{figure}

\paragraph{Transformer setup.}
For the transformer experiment, we use a single self-attention block followed by an MLP, evaluated at initialization on i.i.d.\ Gaussian input sequences \(X \in \R^{n_x \times n_t \times n_{\mathrm{in}}}\). We vary \(n_x \in \{5,10,15\}\), use attention width \(n_{\mathrm{attn}}=64\), \(n_{\mathrm{out}}=1\), and \(n_t=50\), and take the MLP to have \(3\) layers with \(128\) neurons per layer. We then vary the input dimension \(n_{\mathrm{in}}\) and measure the temporal rank of the resulting global-state NTK as a function of input dimension and batch size. The relevant weight-site matrix is computed from the transformer and MLP weight-sites, consisting of the projected input, attention output, and hidden activations of the MLP (excluding the final readout layer), as derived in Appendix~\ref{subsec:corederivap}.

To test whether this bottleneck can be relieved, we also apply a deterministic one-dimensional Fourier-feature embedding~\citep{tancik2020fourierfeaturesletnetworks} before the input projection and sweep the number of Fourier frequencies. As the feature dimension increases, both the effective rank of the embedded input and the temporal rank of \(\NTK\) increase.

\begin{figure}[t]
    \centering
    \includegraphics[width=.99\linewidth]{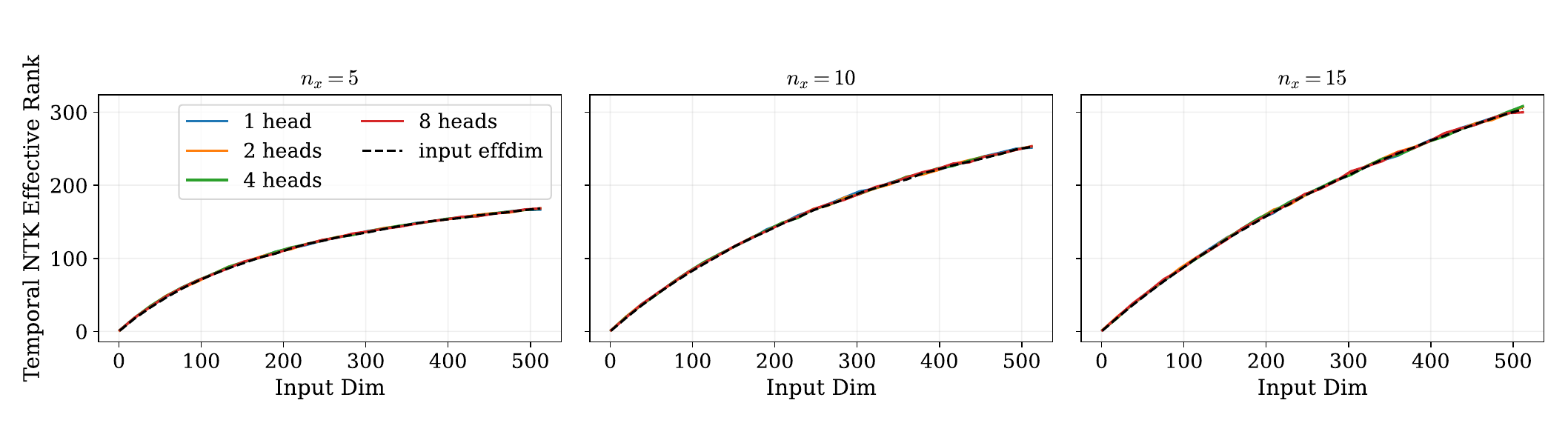}
    \caption{\textbf{Temporal NTK rank versus input dimension for a single-block self-attention toy model with varying head count.} Increasing the number of attention heads has only a modest effect on temporal rank, so the same basic temporal bottleneck from the input representation remains.}
    \label{fig:apmultihead}
\end{figure}

Additional appendix sweeps support the same interpretation. Varying the number of attention heads changes the temporal rank only modestly across input dimensions, so multi-head attention does not remove the same basic temporal bottleneck in this toy setting (Figure~\ref{fig:apmultihead}). By contrast, a two-dimensional sweep over input dimension and attention width shows a clearer separation: the temporal rank grows primarily with input dimension, whereas the spatial and overall ranks grow strongly with attention width (Figure~\ref{fig:twodaptrans}). Thus, widening attention mainly enriches the spatial and overall operator structure, while the temporal expressivity of \(\NTK\) remains bottlenecked by the input representation.

\subsection{Proof of Theorem~\ref{thm:core}}
\label{proof:core}

\subsubsection{Weight-Based Model Definition}
\label{def:weighttied}

A weight-based model is one in which the parameters \(\theta\) can be represented, after lifting if necessary, by a global matrix \(W \in \R^{n \times m}\) that is applied to internal \emph{weight-site} variables collected in a matrix \(V \in \R^{k \times m}\). Letting
\[
Q := V W^\top \in \R^{k \times n},
\]
we assume the model can be written in the form
\[
\mathcal{F}(h,x,\theta)=\mathcal{G}_h(h,V,Q,x),
\]
where \(V,Q,W\) are uniquely determined by a lifted system of constraints in the augmented state space \((h,V,Q)\). Concretely, the lifted system has the form
\begin{align}
    \label{eqn:augf}
    0=\mathcal{F}_{\mathrm{aug}}((h,V,Q),x,W)
    =
    \begin{pmatrix}
        \mathcal{G}_h(h,V,Q,x) \\
        \mathcal{G}_V(h,V,Q,x) \\
        Q - V W^\top
    \end{pmatrix},
\end{align}
where \(\mathcal{G}_V(h,V,Q,x)=0\) determines the weight-site variables \(V\), and the final constraint enforces \(Q=VW^\top\). We assume \(\mathcal{G}_h\) maps into \(S\), and \(\mathcal{G}_V\) maps into \(\R^{k \times m}\).

This definition guarantees that the parameter dependence enters through matrix--vector interactions with the weight sites. Allowing \(V\) to be defined implicitly rather than explicitly in terms of \(h\) and \(x\) accommodates nested architectures, where intermediate weight-site variables themselves depend on earlier weighted computations.

\paragraph{Infinite weight-site case.}
More generally, one may allow infinitely many weight sites (\(k=\infty\)), but we avoid that notation here for simplicity. The proof is unchanged in essence: the object \(VV^\top\) is then interpreted as a second-moment operator constructed by probing, rather than as an explicitly formed matrix. Its rank is still bounded by the feature dimension \(m\), so a finite number of probes suffices to identify its range.

\paragraph{Motivating nested example.}
Consider the nested map \(y = W_2 \sigma(W_1 x)\). Define
\[
v_1 := x,\qquad q_1 := W_1 v_1,\qquad v_2 := \sigma(q_1),\qquad q_2 := W_2 v_2,
\]
so that \(y=q_2\). Stacking the weights into a block-diagonal matrix \(W=\mathrm{blockdiag}(W_1,W_2)\), the weight sites can be written as
\[
V =
\begin{pmatrix}
x & 0 \\
0 & \sigma(q_1)
\end{pmatrix},
\qquad
Q = V W^\top.
\]
This can be formulated as a lifted system of constraints in \((y,V,Q)\). The point is that even nested architectures can be rewritten so that all parameter dependence appears through products of a shared weight matrix with weight-site variables.

\subsubsection{Proof}

\begin{proof}
Lift the model into the augmented state variables \(W \in \R^{n \times m}\), \(V \in \R^{k \times m}\), and \(Q \in \R^{k \times n}\) using the constraint system in Equation~\eqref{eqn:augf}:
\[
0 = \mathcal{F}_{\mathrm{aug}}((h,V,Q),x,W)
=
\begin{pmatrix}
\mathcal{G}_h(h,V,Q,x) \\
\mathcal{G}_V(h,V,Q,x) \\
Q - V W^\top
\end{pmatrix}.
\]
By construction, the original constraint satisfies
\[
\mathcal{F}(h,x,\theta)=\mathcal{G}_h(h,V,Q,x),
\]
with \(V\) and \(Q\) uniquely determined by the lifted system at the current state.

Define the augmented state space
\[
S_{\mathrm{aug}} := S \oplus \R^{k \times m} \oplus \R^{k \times n},
\qquad
h_{\mathrm{aug}} := (h,V,Q) \in S_{\mathrm{aug}}.
\]
Applying Proposition~\ref{prop:ntk} to the lifted system yields
\[
\mathcal{P}_{\mathrm{aug}} := (D_{h_{\mathrm{aug}}}\mathcal{F}_{\mathrm{aug}})^{-1},
\qquad
\mathcal{K}_{\mathrm{aug}} := (D_W \mathcal{F}_{\mathrm{aug}})(D_W \mathcal{F}_{\mathrm{aug}})^*.
\]
The corresponding first-order correction in the lifted state is
\[
\delta h_{\mathrm{aug}}
=
\begin{pmatrix}
\delta h\\
\delta V\\
\delta Q
\end{pmatrix}
=
\text{NTK}_{S_\mathrm{aug}}
\begin{pmatrix}
\err(h)\\
0\\
0
\end{pmatrix}.
\]
If we view \(\mathcal{P}_{\mathrm{aug}}\) as a \(3\times 3\) block operator, then the correction in the original state \(h\) is obtained from the first row:
\[
\delta h
=
\mathcal{P}_{\mathrm{aug},1}\,
\mathcal{K}_{\mathrm{aug}}\,
\mathcal{P}_{\mathrm{aug},1}^*(\err(h)).
\]
Since \(\err(h)\) is arbitrary, this shows that the original NTK can be written as
\[
\NTK
=
\mathcal{P}_{\mathrm{aug},1}\,
\mathcal{K}_{\mathrm{aug}}\,
\mathcal{P}_{\mathrm{aug},1}^*.
\]

Next, we compute \(\mathcal{K}_{\mathrm{aug}}\). Since only the final block \(Q - V W^\top\) depends on \(W\),
\[
D_W \mathcal{F}_{\mathrm{aug}}
=
\begin{pmatrix}
0\\
0\\
-(V \otimes I_n)
\end{pmatrix}.
\]
Therefore
\[
\mathcal{K}_{\mathrm{aug}}
=
\begin{pmatrix}
0 & 0 & 0\\
0 & 0 & 0\\
0 & 0 & VV^\top \otimes I_n
\end{pmatrix}.
\]
Hence only the \((1,3)\) block of \(\mathcal{P}_{\mathrm{aug}}\) contributes to the induced NTK on \(S\), giving
\[
\NTK
=
\mathcal{P}_{\mathrm{aug},1,3}\,
(VV^\top \otimes I_n)\,
\mathcal{P}_{\mathrm{aug},1,3}^*.
\]
This is the desired factorization. The matrix \(VV^\top\) is the Gram matrix of the weight-site state, so it acts as an unnormalized projector onto the dominant modes already present in the weight sites. This is the Kronecker-core bottleneck described in the theorem.

\paragraph{Existence of \(\mathcal{P}_{\mathrm{aug},1,3}\).}
For completeness, we justify that the block \(\mathcal{P}_{\mathrm{aug},1,3}\) is well defined. Explicitly,
\[
D_{h_{\mathrm{aug}}}\mathcal{F}_{\mathrm{aug}}
=
\begin{pmatrix}
D_h \mathcal{G}_h & D_V \mathcal{G}_h & D_Q \mathcal{G}_h\\
D_h \mathcal{G}_V & D_V \mathcal{G}_V & D_Q \mathcal{G}_V\\
0 & -(W \otimes I_k) & I_{kn}
\end{pmatrix}.
\]
The invertibility of this block operator reduces to invertibility of the corresponding Schur complement
\[
C
:=
D_h \mathcal{G}_h
-
\bigl[D_V \mathcal{G}_h + D_Q \mathcal{G}_h (W \otimes I_k)\bigr]
\bigl[D_V \mathcal{G}_V + D_Q \mathcal{G}_V (W \otimes I_k)\bigr]^{-1}
D_h \mathcal{G}_V.
\]
But by construction \(\mathcal{F}(h,x,\theta)=\mathcal{G}_h(h,V,Q,x)\), so the chain rule gives
\[
D_h \mathcal{F}
=
D_h \mathcal{G}_h
+
D_V \mathcal{G}_h\, D_h V
+
D_Q \mathcal{G}_h\, D_h Q.
\]
Since \(Q=VW^\top\) and \(\mathcal{G}_V(h,V,Q,x)=0\) determines \(V\), the implicit function theorem shows that this Jacobian is exactly the Schur complement \(C\). Therefore
\[
D_h \mathcal{F} = C.
\]
By assumption, the original operator \(\mathcal{P}=(D_h\mathcal{F})^{-1}\) exists, so \(C\) is invertible. Hence \(\mathcal{P}_{\mathrm{aug}}\) exists, and in particular \(\mathcal{P}_{\mathrm{aug},1,3}\) is well defined.

Writing the corresponding block inverse explicitly gives
\begin{align}
\label{eqn:newp}
\mathcal{P}_{\mathrm{aug},1,3}
=
\mathcal{P}\,
\Bigl(
D_h \mathcal{F}
+
\bigl[D_V \mathcal{G}_h + D_Q \mathcal{G}_h (W \otimes I_k)\bigr]
\bigl[D_V \mathcal{G}_V + D_Q \mathcal{G}_V (W \otimes I_k)\bigr]^{-1}
D_Q \mathcal{G}_V
\Bigr).
\end{align}
The precise form is not especially important in applications; what matters is that it exists and induces the factorization
\[
\NTK = \mathcal{P}_{\mathrm{core}} (VV^\top \otimes I_n)\mathcal{P}_{\mathrm{core}}^*,
\qquad
\mathcal{P}_{\mathrm{core}} := \mathcal{P}_{\mathrm{aug},1,3}.
\]
In practice, one usually identifies the weight sites directly from the architecture, rather than computing Equation~\eqref{eqn:newp} explicitly.
\end{proof}

\subsection{Proof of Proposition~\ref{prop:spacetime}}
\label{subsec:stproofap}

\begin{proof}
For a broader discussion, including Schmidt decompositions of operators over tensor-product spaces, see \citep{nielsen2010quantum,eckart1936approximation}. Here we only need the fact that the NTK is positive semidefinite, since it is a Gram operator.

We first show that the reduced views are also positive semidefinite. Let \((e_j)_{j=1}^n\) be an orthonormal basis of \(\R^n\), and let \(u \in \R^k\). Then
\[
\langle u, \NTK^{\mathrm{temp}}(u)\rangle
=
\frac{1}{n}\sum_{j=1}^n
\langle u e_j^\top, \NTK(u e_j^\top)\rangle_F.
\]
Each term in the sum is nonnegative because \(\NTK\) is PSD, hence \(\NTK^{\mathrm{temp}}\) is PSD. The same argument shows that \(\NTK^{\mathrm{space}}\) is PSD.

Now let \(E \in \R^{k \times n}\) be any error signal, and let \(\delta h = \NTK(E)\). We will show
\begin{align}
    \im(\NTK(E)) \subseteq \im(\NTK^{\mathrm{temp}}), \tag{F1}
\end{align}
and
\begin{align}
    \im(\NTK(E)^\top) \subseteq \im(\NTK^{\mathrm{space}}). \tag{F2}
\end{align}
These two inclusions imply
\[
\rank(\delta h)
\le
\min\!\bigl(
\rank(\NTK^{\mathrm{temp}}),
\rank(\NTK^{\mathrm{space}})
\bigr),
\]
since row and column rank are equal.

We now prove F1. Let \(u \in \ker(\NTK^{\mathrm{temp}})\). Since \(\NTK^{\mathrm{temp}}\) is PSD,
\[
0
=
\langle u,\NTK^{\mathrm{temp}}(u)\rangle
=
\frac{1}{n}\sum_{j=1}^n
\langle u e_j^\top, \NTK(u e_j^\top)\rangle_F.
\]
Each term is nonnegative, so each term must in fact be zero:
\[
\langle u e_j^\top, \NTK(u e_j^\top)\rangle_F = 0,
\qquad j=1,\dots,n.
\]
Because \(\NTK\) is PSD, this implies
\[
\NTK(u e_j^\top)=0,
\qquad j=1,\dots,n.
\]
Using symmetry of \(\NTK\), for any \(j\),
\[
\langle u, \NTK(E)e_j\rangle
=
\langle u e_j^\top, \NTK(E)\rangle_F
=
\langle \NTK(u e_j^\top), E\rangle_F
=0.
\]
Therefore \(u^\top \NTK(E)=0\). Since this holds for all \(u \in \ker(\NTK^{\mathrm{temp}})\), we conclude
\[
\im(\NTK(E)) \subseteq \ker(\NTK^{\mathrm{temp}})^\perp.
\]
Because \(\NTK^{\mathrm{temp}}\) is symmetric,
\[
\ker(\NTK^{\mathrm{temp}})^\perp = \im(\NTK^{\mathrm{temp}}),
\]
which proves F1.

The proof of F2 is identical after exchanging the temporal and spatial roles. If \(v \in \ker(\NTK^{\mathrm{space}})\), then the same argument shows
\[
\NTK(E) v = 0.
\]
Hence
\[
\im(\NTK(E)^\top) \subseteq \ker(\NTK^{\mathrm{space}})^\perp = \im(\NTK^{\mathrm{space}}).
\]
This proves F2 and completes the argument.
\end{proof}

\subsection{Deriving Weight-Sites \texorpdfstring{$V$}{V}}
\label{subsec:corederivap}

\subsubsection{RNN}
\label{subsec:rnncore}

Consider the recurrent network
\begin{align}
    h_j(t+1) &= \phi\!\bigl(W h_j(t) + W_{\mathrm{in}} x_j(t+1)\bigr), \qquad h_j(0):=h_0,
\end{align}
with output
\begin{align}
    y_j(t) &= W_{\mathrm{out}} h_j(t),
\end{align}
where \(W\), \(W_{\mathrm{in}}\), and \(W_{\mathrm{out}}\) are trainable.

The trainable weights appear only through multiplication of the current hidden state, current input, and current hidden state at readout. Thus the relevant weight-site variables are simply \(h_j(t)\) and \(x_j(t+1)\) for the recurrent and input weights, together with \(h_j(t)\) again for the output map. If one stacks all trainable matrices into a block-diagonal matrix
\[
\widetilde W = \mathrm{blockdiag}(W, W_{\mathrm{in}}, W_{\mathrm{out}}),
\]
then the corresponding weight-site matrix may be written schematically as
\[
V = \mathrm{cat}(H_-, X, H),
\]
where \(H_-\) collects the pre-update hidden states \(h_j(t)\), \(X\) collects the inputs \(x_j(t+1)\), and \(H\) collects the hidden states feeding the readout. If the readout weights are not included in the NTK under consideration, one simply omits the final block.

In the common case where only recurrent and input weights are trained, this reduces to
\[
V = \mathrm{cat}(H_-, X).
\]
Consequently,
\[
\mathcal{K} = V V^\top \otimes I_n,
\]
up to the obvious block-size bookkeeping associated with stacking the trainable weight matrices. In expanded form, this corresponds to a sum of Gram contributions from each trainable weight family:
\[
\mathcal{K}
=
(H_- H_-^\top + X X^\top)\otimes I_n
\]
when only \(W\) and \(W_{\mathrm{in}}\) are trained. Thus the recurrent global-state NTK is bottlenecked by the joint hidden--input activity over all batches and timesteps.

\subsubsection{GRU}
\label{subsec:grucorederive}

Consider the GRU without biases, for simplicity:
\begin{align}
\begin{split}
    r_j(t+1)  &= \sigma(W_{ir} x_j(t+1) + W_{hr} h_j(t)), \\
    z_j(t+1) &= \sigma(W_{iz} x_j(t+1) + W_{hz} h_j(t)), \\
    \ell_j(t+1) &= \tanh(W_{i\ell} x_j(t+1) + r_j(t+1) \odot (W_{h\ell} h_j(t))), \\
    h_j(t+1) &= (1-z_j(t+1)) \odot \ell_j(t+1) + z_j(t+1)\odot h_j(t).
\end{split}
\end{align}
Stack the weights \(W_{ir},W_{iz},W_{i\ell}\) into a single matrix \(W_{\mathrm{in}}\), and \(W_{hr},W_{hz},W_{h\ell}\) into \(W\). Then the model can be written in the lifted form
\begin{align}
\begin{split}
    \label{eqn:weightsitegru}
    q_j(t+1) &= W h_j(t), \\
    p_j(t+1) &= W_{\mathrm{in}} x_j(t+1), \\
    h_j(t+1) &= \phi_{\mathrm{step}}\!\bigl(q_j(t+1), p_j(t+1), h_j(t)\bigr),
\end{split}
\end{align}
where \(\phi_{\mathrm{step}}\) performs the remaining GRU nonlinearities and gating, but does not involve the trainable weights explicitly.

More explicitly,
\[
\phi_{\mathrm{step}}(q,p,h)
=
(1-\sigma(p_2+q_2)) \odot \tanh\!\bigl(p_3 + \sigma(p_1+q_1)\odot q_3\bigr)
+
\sigma(p_2+q_2)\odot h,
\]
where \(q_1,q_2,q_3\) extract the pieces of \(q\) corresponding to \(W_{hr},W_{hz},W_{h\ell}\), respectively, and similarly for \(p_1,p_2,p_3\).

Thus the weight-site augmented state is simply the collection of hidden states and inputs:
\[
V = \mathrm{cat}\{(h_j(t),x_j(t+1))\}_{1 \le j \le n_x,\ 0 \le t \le n_t-1}.
\]
This is already evident from the original GRU equations: the trainable weights only multiply \(h_j(t)\) and \(x_j(t+1)\), and no additional weighted intermediate variables are required.

\paragraph{Verifying the Kronecker core.}
To make the decomposition explicit, define \(v_j(t)=(q_j(t),p_j(t),h_j(t))\), with initial condition \(v_j(0)=(0,0,h_0)\), and let \(f_v\) denote the one-step lifted dynamics in Equation~\eqref{eqn:weightsitegru}. The matrix \(W\) has shape \(3n \times n\), while \(W_{\mathrm{in}}\) has shape \(3n \times n_{\mathrm{in}}\). Then
\[
D_W f_v = [h_j(t),0]^T \otimes I_{3n},
\qquad
D_{W_{\mathrm{in}}} f_v = [0,x_j(t+1)]^T \otimes I_{3n},
\]
up to the obvious reshaping induced by stacking the gates.

Let \(H_-\) denote the matrix collecting all \(h_j(t)\) for \(t=0,\dots,n_t-1\), and let \(X\) denote the matrix collecting all \(x_j(t+1)\) for \(t=0,\dots,n_t-1\). Then
\[
\mathcal{K}
=
(D_W f_v)(D_W f_v)^\top + (D_{W_{\mathrm{in}}} f_v)(D_{W_{\mathrm{in}}} f_v)^\top
=
(H_-H_-^\top + XX^\top)\otimes I_{3n}.
\]
Equivalently, concatenating \(H_-\) and \(X\) into the augmented state matrix \(V\),
\[
\mathcal{K} = V V^\top \otimes I_{3n},
\]
which is exactly the Kronecker-core factorization used in the main text.

\subsubsection{Single-Block Transformer}
\label{subsec:aptrans}

\newcommand{\wbox}[1]{\colorbox{green!50}{$\displaystyle #1$}}

We write the self-attention block in the form
\[
Q = X\,\wbox{W_Q^\top},
\qquad
K = X\,\wbox{W_K^\top},
\qquad
V_{\mathrm{val}} = X\,\wbox{W_V^\top},
\]
and define
\[
A = \operatorname{softmax}\!\Bigl(\frac{QK^\top}{\sqrt{n_{\mathrm{attn}}}}\Bigr)V_{\mathrm{val}},
\qquad
Y_{\mathrm{attn}} = A\,\wbox{W_O^\top}.
\]
We use \(V_{\mathrm{val}}\) here for the value vectors to avoid notational conflict with the weight-site matrix \(V\).

The MLP is then
\[
H_{\ell+1} = \sigma\!\bigl(H_\ell \wbox{W_\ell^\top} + b_\ell\bigr),
\qquad \ell=1,\dots,L-1,
\qquad H_1 := Y_{\mathrm{attn}},
\]
with final output
\[
Y = H_{L-1}\,\wbox{W_L^\top} + b_L.
\]

The relevant shapes are
\[
X \in \R^{B \times T \times n_{\mathrm{in}}},
\qquad
A \in \R^{B \times T \times n_{\mathrm{attn}}},
\qquad
H_\ell \in \R^{B \times T \times n_{\mathrm{mlp}}}
\quad \text{for } \ell=1,\dots,L-1.
\]

Concatenating the trainable weights into one block-diagonal matrix
\[
W = \mathrm{blockdiag}(W_Q,W_K,W_V,W_O,W_1,\dots,W_L),
\]
the corresponding weight-site matrix is
\[
V = \mathrm{cat}(X,X,X,A,H_1,\dots,H_{L-1}) \in \R^{B T \times m},
\]
where
\[
m = 3n_{\mathrm{in}} + n_{\mathrm{attn}} + (L-1)n_{\mathrm{mlp}}
\]
is the lifted feature dimension. Therefore
\[
\mathcal{K} = V V^\top \otimes I_n,
\]
for the appropriate output-side dimension \(n\) induced by the stacked weight parameterization. Expanding the Gram term gives
\[
\mathcal{K}
=
\bigl(3XX^\top + AA^\top + H_1H_1^\top + \cdots + H_{L-1}H_{L-1}^\top\bigr)\otimes I_n.
\]
Hence
\[
\rank(VV^\top) \le m = 3n_{\mathrm{in}} + n_{\mathrm{attn}} + (L-1)n_{\mathrm{mlp}}.
\]
In particular, the temporal structure of the global-state NTK is bottlenecked by the joint rank of the input, attention output, and hidden activations that immediately feed trainable weights.

\end{document}